\title{Simple Opinion Dynamics for No-Regret Learning}
\author{%
  John Lazarsfeld \\
  SUTD \\
  \texttt{jlazarsfeld@gmail.com}
  \and
  Dan Alistarh \\
  IST Austria \\
  \texttt{dan.alistarh@ist.ac.at}
}
\date{}
\begin{document}

\maketitle
\thispagestyle{empty}

\begin{abstract}
We study a cooperative multi-agent bandit
setting in the distributed GOSSIP model:
in every round, each of $n$ agents chooses an action from
a common set, observes the action's corresponding reward,
and subsequently exchanges information with
\textit{a single} randomly chosen neighbor,
which may inform its choice in the next round. 
We introduce and analyze families of memoryless
and time-independent protocols for this setting,
inspired by opinion dynamics that are well-studied
for other algorithmic tasks in the GOSSIP model. 

For stationary reward settings, we prove for the first time that these 
simple protocols exhibit \textit{best-of-both-worlds}
behavior, simultaneously obtaining \textit{constant}
cumulative regret scaling like $R(T)/T = \widetilde O(1/T)$,
and also reaching consensus on
the highest-mean action within $\widetilde O(\sqrt{n})$ rounds.
We obtain these results by showing a new connection
between the global evolution of these
decentralized protocols and a class of
\textit{zero-sum multiplicative weights update} processes.
Using this connection, we establish a general framework
for analyzing the population-level regret and other
properties of our protocols.  
Finally, we show our protocols are also surprisingly
robust to adversarial rewards, and in this regime we
obtain sublinear regret
scaling like $R(T)/T = \widetilde O(1/\sqrt{T})$ as long as the
number of rounds does not grow too fast 
as a function of $n$. 
\end{abstract}

\newpage
\setcounter{page}{1}


\section{Introduction}

The multi-armed bandit problem, where a single
learning agent chooses actions over a sequence of rounds
in order to maximize its total reward,
is among the most well-studied in online learning.
Distributed, multi-agent variants
of this problem have also been widely studied under
various constraints; one particular such line of work is the
\textit{cooperative multi-agent bandit setting},
where agents are connected over a communication graph
and play against a common bandit instance,
choosing actions in parallel over $T$ rounds.
Each agent locally runs a bandit algorithm that may involve
communication with neighbors,
and the information exchanged can be used to
determine an agent's future actions.
This cooperative setting has been studied for both
stochastic~\cite{szorenyi2013gossip,landgren2016distributed,
  kolla2018collaborative, martinez2019decentralized,
  DBLP:journals/pomacs/SankararamanGS19}
and non-stochastic bandits~\cite{awerbuch2008online,
  cesa2016delay, bar2019individual},
where communication between agents has been shown to
improve an agent's regret on average,
compared to each agent locally running a centralized bandit
algorithm without any communication.

However, most prior works in this setting require 
that every agent communicate with \textit{all its neighbors}
in each round
(as pointed out by Cesa-Bianchi et al.~\cite{cesa2016delay},
this resembles the LOCAL model of
distributed computation~\cite{linial1992locality}).
When the underlying graph is dense, this volume of
communication may be prohibitively large,
which is a known bottleneck in many practical
settings, including in  distributed machine
learning~\cite{alistarh2017qsgd,
  koloskova2019decentralized,
  DBLP:conf/icml/KoloskovaSJ19}.

In contrast, much less is known about
cooperative multi-agent bandits in more
lightweight \emph{decentralized} models of
distributed communication, such as the
GOSSIP model~\cite{boyd2006randomized,shah2009gossip}.
In this model, at every round, each agent is
randomly connected to \textit{one of its neighbors},
and thus the total number of information
exchanges per-round scales only linearly in the size of
the population, even for dense communication graphs.
Algorithms for general distributed tasks have been
studied extensively in the GOSSIP model,
both in modern machine learning and optimization
settings~\cite{lian2017can,
  assran2019stochastic,
  DBLP:conf/nips/EvenBBFHGMT21},
and also in the context of simple, memoryless
\textit{opinion dynamics}, where simple
local transition rules can be shown to solve
more complex global tasks like consensus and
majority~\cite{
  10.1214/11-PS184,
  10.3150/12-BEJSP04,
  chierichetti2010rumour,
  bastide2021self,
  becchetti2020consensus,
  becchetti2024minority}.

Yet, much less is known about the power of
this model in the cooperative bandit setting,
and in particular on the learning properties of
simple opinion-dynamic-like protocols
that have been studied in other contexts.
This line of inquiry is the focus of the present paper,
and our main results prove that simple protocols of
this type can obtain both sublinear regret
and reach consensus on the best action
in this setting.
For concreteness, we begin by introducing the
setting more precisely:

\subsection{Problem Setting}
\label{sec:intro:setting}

\begin{setting*}
  Consider $n$ anonymous agents distributed over a
  communication graph $G$, and
  interacting with an $m$-armed bandit
  over $T$ rounds. In each round $t$:
  \begin{enumerate}[
    label=(\roman*),
    topsep=0.25em,
    itemsep=-0.25em,
    ]
  \item
    Every agent $u \in [n]$
    \textit{chooses an action} $j \in [m]$.
  \item
    Each action $j \in [m]$ generates a reward
    $g^t_j \sim \nu^t_j$.
    Each agent choosing action $j$
    observes $g^t_j$.
  \item
    Every agent pulls information
    from a \textit{single neighbor}
    sampled uniformly at random. 
  \end{enumerate}
\end{setting*}
In this setting, we assume each agent is equipped
with a fixed local (possibly randomized) protocol
$\calP$. Based on an agent's observed rewards and
any information exchanged with its neighbor,
the protocol $\calP$ determines its action choice
at the next round.
When every agent uses the protocol $\calP$
to choose its actions,
we measure its global performance via a
\textit{population-level} regret.
For this, we let
$\pt := (p^t_1, \dots, p^t_m) \in \Delta_m$
denote the discrete distribution whose coordinates
specify the \textit{fraction} of $n$ agents
choosing each action $j$ at round $t$.
At time $t=0$, we assume each agent is deterministcally
initialized with an action choice $j$
such that $\p^0 = (1/m, \dots, 1/m)$.
Moreover, we denote
by $\gt := (g^t_1, \dots, g^t_m)$ the vector
of realized rewards at round $t$.
Then we define:

\begin{definition}[\textbf{Population-level Regret}]
  \label{def:regret}
  Given a time horizon $T$ and
  a reward sequence $\{\gt\}$,
  the regret of the sequence $\{\pt\}$
  generated by $\calP$
  is  
  $R(T) :=
  \max_{j \in [m]}\;
  {\sum_{t \in [T]}} 
  \E[g^t_j] - 
  \E[\langle\pt, \gt\rangle]
  $.
\end{definition}
In words, $R(T)$ measures difference
between the cumulative expected reward of
the best \textit{fixed} action
and the expected cumulative reward of the population
\textit{averaged} over all agents
(i.e., weighted at each round by the distribution $\pt$).
We make the following remarks:
\begin{itemize}[
  topsep=0.5em,
  leftmargin=1.5em,
  itemsep=0em,
  ]
\item
  In the problem setting,
  each agent (from a local perpsective) receives
  only bandit feedback on the reward of its chosen
  action, while from a global perspective,
  the full reward vector $\gt$ can be distributed
  across the population.
  Globally, the problem can thus be
  viewed as a decentralized instance
  of the \textit{prediction with expert advice}
  (\textit{experts}) setting from online learning
  \cite{cesa2006prediction}.
  This is similar to the model of other prior
  works on cooperative bandits
  \cite{cesa2016delay, bar2019individual}.
  
\item
  The regret definition $R(T)$ is sometimes referred to
  as the \textit{average welfare regret}
  or \textit{social regret} and is the same
  notion considered in related works~
  \cite{landgren2016distributed,
    cesa2016delay, DBLP:conf/podc/CelisKV17,
    martinez2019decentralized, bar2019individual}.
  The gold standard in online learning and bandit
  settings is to ensure $R(T)$ grows sublinearly 
  in $T$ (meaning $R(T) = o(T)$, 
  or equivalently $R(T)/T = o(1)$). If a protocol $\calP$
  achieves this goal, we say $\calP$ \textit{obtains sublinear
    regret}, or that $\calP$ is a \textit{no-regret
    learning} protocol~\cite{bubeck2012regret, lattimore2020bandit}. 
\end{itemize}  
The primary focus on this work is the
\textit{stationary reward setting}, where
the distributions $\nu^t_j$ remain fixed
over rounds (this is sometimes referred to as the
\textit{stochastic rewards} setting, and is in contrast
to an \textit{adversarial rewards} setting, where
reward means may change over time).
Other than several standard boundedness conditions,
we make no other distributional assumptions.
Defined formally:

\begin{restatable}[\textbf{Stationary Rewards}]
  {assumption}{stationaryrewards}
  \label{reward:stationary}
  For each $j \in [m]$, assume for all rounds $t \in [T]$
  that $g^t_j \sim \nu_j$, where
  for some $\sigma \ge 1$, 
  $\nu_j$ has support $[0, \sigma]$ and mean
  $\mu_j := \E[\nu_j] \in [0, 1]$.
  We assume each $\mu_j$ is an absolute constant,   
  and without loss of generality, we assume
  $1 \ge \mu_1 > \mu_2 \ge \dots \ge \mu_m \ge 0$.
\end{restatable}

\paragraph{Algorithmic focus:
  learning via simple opinion dynamics}
Most prior works on cooperative bandits
under the more powerful LOCAL communication model
employ adaptations of common
single-agent bandit algorithms (such as UCB and
EXP3~\cite{bubeck2012regret})
to the multi-agent setting. 
While this approach can lead to 
sublinear regret bounds in both stationary
and adversarial reward settings,
these adaptations usually require agents to
locally maintain a probability
distribution over the full set of actions at each round,
and to carefully aggregate information
received from multiple neighbors in order to determine
future choices~\cite{
  landgren2016distributed,
  kolla2018collaborative,martinez2019decentralized,
  cesa2016delay, bar2019individual}.

In contrast, the main algorithmic focus of this work
is to understand whether extremely simple protocols
related to \textit{opinion dynamics} --- studied extensively
for other problems in the GOSSIP model ---
can obtain no-regret guarantees in this learning setting.
In particular, in the general GOSSIP model,
opinion dynamics are local protocols in which
an agent's current state (opinion) is updated
based only on the state of the agent it
interacted with in the most recent round
\cite{becchetti2020consensus}.
These dynamics model fully decentralized scenarios in which
agents are anonymous, extremely computationally limited,
and are only able to employ some fixed, time-independent update rule.

Despite their simplicity, opinion dynamics
have been shown to exhibit complex
global behavior, and many recent works have
proven their ability to solve
consensus~\cite{doerr2011stabilizing,
  becchetti2024minority},
majority~\cite{becchetti2014plurality,
  ghaffari2016polylogarithmic},
and other synchronization tasks~\cite{bastide2021self}
important in distributed and decentralized settings.
Moreover, opinion-dynamic like processes
are useful primitives in other decentralized learning
and optimization settings~\cite{lu2011gossip,lian2017can,
  DBLP:conf/icml/KoloskovaSJ19},
and analyzing such processes in the present
setting serves as an intial step in developing
algorithms robust to  systems prone to communication errors,
and with interchangeable agents entering and leaving
the population over time~\cite{martinez2019decentralized}.

From this perspective, in the present cooperative
bandit setting, it is natural to view an agent's action
choice at round $t$ as its current opinion,
and this opinion (action choice) can be updated in the
subsequent round based only on the information exchanged
with its randomly sampled neighbor. 
This raises the natural question of whether,
in the cooperative bandit setting,
simple protocols of this form can effectively learn
at a population level.
To make this precise, we define the following
\textit{memoryless and time-independent}
property to describe protocols with
opinion-dynamic-like attributes:

\begin{restatable}{property}{memoryless}
  \label{property:memoryless}
  A protocol $\calP$ is
  \textbf{memoryless and time-independent} if
  (i) an agent's action choice at round $t+1$
  depends only on its own action choice
  and observed reward from round $t$,
  and those of its random interaction partner,
  and (ii) the decision rule
  used by an agent is identical in all rounds.
\end{restatable}

\noindent
In other words, using a memoryless and time-independent
protocol means an agent's action choice only
depends on information exchanged in its most recent
interaction, and not on maintaining cumulative,
time-dependent statistics (e.g., the number
of times a certain action was chosen).
This precludes adaptations of common single-agent
bandit algorithms like UCB from satisfying
Property~\ref{property:memoryless}.
With these properties in mind, 
our main algorithmic question is the following:
\begin{center}
  (Q1) \textit{Is there a memoryless
    and time-independent protocol $\calP$ that obtains
  sublinear regret?}
\end{center}

\paragraph{Best-action consensus}
Note that in addition to the standard goal
of obtaining sublinear regret, in (single-agent) online learning
and multi-armed bandit settings with stationary rewards,
a common separate objective is \textit{best-action identification}
\cite{lattimore2020bandit}. 
A successful algorithm for this task identifies 
(with some desired probability) the action
with the highest-mean reward and continues to
choose this action in future rounds in perpetuity. 

In the present multi-agent setting, this
corresponds to the distribution $\pt$
reaching (and remaining in for all subsequent rounds)
a point mass on the highest-mean action
(i.e., \textit{all} $n$ agents continue to choose this
optimal action at every round).
In the general context of the distributed
GOSSIP model, such behavior corresponds
to reaching \textit{consensus} on the
highest-mean action. Defined more formally:

\begin{definition}[Best-Action Consensus]
  \label{def:consensus}
  Under the stationary reward setting of
  Assumption~\ref{ass:stationary-proof},
  we say a protocol $\calP$ reaches
  \textit{best-action consensus} in 
  $\tau$ rounds if $\pt = (1, 0, \dots, 0)$
  for all $\tau \le t \le T$.
\end{definition}

\noindent
As mentioned, in the GOSSIP model,
many simple opinion dynamics have been proven
to quickly reach consensus (i.e., within at
most a number of rounds polynomial
in the size of the population $n$).
Thus in the present setting, in addition to
understanding whether simple opinion dynamic protocols
can obtain sublinear regret, a second natural question
to ask is:
\begin{center}
  (Q2) \textit{Is there a memoryless and
    time-independent protocol $\calP$ that reaches \\
  best-action consensus 
  within $\text{poly}(n)$ rounds?}
\end{center}

\paragraph{Related works with GOSSIP communication}
Recent works of Celis et al.~\cite{DBLP:conf/podc/CelisKV17}
and Su et al.~\cite{DBLP:journals/pomacs/SuZL19} considered
protocols in closely related settings that made
initial attempts at answering these questions.
In particular, Celis et al. proposed a simple dynamics
satisfying Property~\ref{property:memoryless} in a
slight variant of the GOSSIP model, and they considered
a stationary reward setting restricted to Bernoulli distributions. 
However, the regret bounds obtained for their protocol
cannot be made sublinear unless $n$ is exponentially
large in $T$. 
Su et al.~\cite{DBLP:journals/pomacs/SuZL19}
considered a similar dynamics, again in the stationary
Bernoulli reward setting, and they proved that best-action consensus
is eventually reached as $T \to \infty$,
but without providing any rate.
Together, these two works have left open obtaining
positive answers to questions (Q1) and (Q2). 

\subsection{Our Contributions}
\label{sec:intro:contribs}

In this work, we introduce and analyze
families of local protocols that satisfy
Property~\ref{property:memoryless} and answer
questions (Q1) and (Q2) affirmatively.
Specifically, we restrict our focus to the
\textbf{complete communication graph},
and we show that a simple instantiation of our family
of protocols \textit{\textbf{simultaneously obtains both
sublinear regret and quickly reaches best-action consensus}}:

\begin{mdframed}
  \textbf{Main Results (Informal):}
  \it
  There is a memoryless and time-independent protocol
  such that:
  \begin{itemize}[
    topsep=0em,
    leftmargin=1em,
    itemsep=0.1em
    ]
  \item
    \textnormal{\textbf{Constant regret}
    (see Theorem~\ref{thm:stationary-regret})}:
  for any time horizon $T$,
  in the stationary reward setting,
    the protocol obtains regret at most
    $R(T) \le O(\sigma \log m + \log^3 n)$
    with high probability.
  \item
    \textnormal{\textbf{Fast best-action consensus}
      (see Theorem~\ref{thm:consensus})}:
    in the stationary reward setting,
    the protocol reaches best-action
    consensus in $O(\sqrt{n} \log n)$ rounds
    with high probability. 
  \end{itemize}
\end{mdframed}

Together, these results demonstrate the surprising
power of a simple opinion-dynamics protocol
in the cooperative bandit setting. 
For better interpretation we make the following remarks:
\begin{itemize}[
  topsep=0.5em,
  leftmargin=1.5em,
  itemsep=0em,
  ]
\item
  First, not only does our protocol obtain
  sublinear regret, but it obtains only \textit{constant}
  regret with respect to $T$, scaling only with
  logarithmic dependencies on $m$ and $n$. 
  With respect to $T$ and $m$, this bound
  matches the optimal rate for the single-agent
  \textit{experts} setting~\cite{DBLP:journals/toc/AroraHK12},
  and it improves by at least a $O(\log T)$ factor
  over the best single-agent bandit regret bounds,
  which are of order $O(m \log T)$
  (assuming a constant gap $(\mu_1 - \mu_2)$)
  ~\cite{bubeck2012regret,lattimore2020bandit}.

  Thus when the population-level regret $R(T)$ 
  is viewed as the \textit{average} per-agent regret,
  the $\softO(1)$ bound of our protocol
  (which leverages inter-agent communication)
  improves over the regret of the best single-agent bandit
  algorithms (which do not use inter-agent communication). 
  Moreover, our bound holds for \textit{any}
  family of distributions satisfying the boundedness
  conditions of Assumption~\ref{reward:stationary},
  which extends beyond the Bernoulli reward setting
  considered in related work~\cite{DBLP:conf/podc/CelisKV17,
    DBLP:journals/pomacs/SuZL19}.
  
\item
  In addition to obtaining only constant regret,
  Theorem~\ref{thm:consensus} also establishes that this
  same algorithm reaches best action consensus
  within roughly $\sqrt{n}$ rounds.
  This combination of no-regret learning
  \textit{and} best-action consensus achieved by
  our protocol exhibits the type of \textit{best-of-both-worlds}
  behavior that is generally sought after in 
  other multi-agent learning settings~\cite{hu2024best}.
\end{itemize}

Moreover, using similar techniques
as in the proof of Theorem~\ref{thm:stationary-regret},
we show the additional surprising result that
our protocol is also robust to \textit{adversarial rewards}.
In this more challenging reward setting,
the distributions $\nu^t_j$ may now vary
over time (we state this formally in
Section~\ref{sec:adversarial-regret-details}).
In particular, we obtain a sublinear regret bound
for our protocol under these adversarial rewards,
so long as the time horizon $T$
does not grow too large with respect to $n$:

\begin{mdframed}
  \textnormal{\textbf{Sublinear regret for
      adversarial rewards}
    (see Theorem~\ref{thm:regret-adversarial})}:
  there is a memoryless and time-independent
  protocol such that, in the adversarial reward setting,
  when $T \le (0.5 - \eps) \log_5 n$
  for any $\eps \in (0, 0.5)$,
  the protocol obtains regret
  $R(T) \le O(\sqrt{T \log m})$. 
\end{mdframed}
Thus even in a setting with adversarially changing
reward distributions, our simple opinion-dynamics
protocol still exhibits no-regret learning behavior.
We again make several additional remarks:
\begin{itemize}[
  topsep=0.5em,
  leftmargin=1.5em,
  itemsep=0em,
  ]
\item
  First, note that from a single-agent perspective
  with adversarial rewards, the sublinear
  $O(\sqrt{T \log m})$ rate is optimal
  (with respect to both $T$ and $m$) for the
  \textit{experts} setting~\cite{DBLP:journals/toc/AroraHK12},
  and it improves  by a $\sqrt{m}$ factor
  over the optimal $O(\sqrt{T m \log m})$ regret bounds
  in the bandit setting
  \cite{bubeck2012regret}.
  Thus similarly to the case of stationary rewards,
  our protocol leads to better population-level 
  regret guarantees over the best single-agent
  method in this cooperative setting.
  Moreover, our bound in this setting also answers an
  open question of Celis et al.~\cite{DBLP:conf/podc/CelisKV17},
  who asked whether this type of simple dynamics
  could obtain sublinear regret guarantees under
  time-varying reward distributions.
\item
  On the other hand, this result restricts
  the time horizon $T$ to grow only logarithmically in $n$.
  The main barrier in allowing for larger $T$
  is related to the fact that, in this setting, the
  mass of the best action in each distribution $\pt$ may 
  still be relatively small.
  Thus for large $T$, the noise from the random agent
  communications may outweigh any meaningful signal about
  good action choices.

  In particular, in Section~\ref{sec:adversarial-regret-details},
  we also prove for our protocol a lower bound on $R(T)$
  that scales like $\Omega(T)$ once $T$ is roughly 
  $\widetilde \Omega(\sqrt{n})$.
  Thus in general, our work leaves open the
  question of whether there exist other simple protocols
  (i.e., satisfying Property~\ref{property:memoryless})
  that can obtain sublinear regret under adversarial
  rewards for longer time horizons.
\end{itemize}

\paragraph{Summary of techniques}

Our main technical insight is to show that,
for several families of very simple protocols
satisfying the memoryless and time-indepenence
attributes of Property~\ref{property:memoryless},
the evolution of the resulting distributions
$\{\pt\}$ is closely related to a certain family
of (centralized) multiplicative weights update (MWU)
processes \cite{DBLP:journals/toc/AroraHK12}.
Using this connection allows us to develop a
general analysis framework where the
regret of $\{\pt\}$ is controlled by bounding
the regret of the corresponding MWU process.
At a high-level, this approach is similar in
spirit to the analysis of~\cite{DBLP:conf/podc/CelisKV17},
which can be viewed as a special case of our framework.

\vspace*{-0.5em}
\paragraph{Structure of paper}
The remainder of the paper is structured as follows:
in Section~\ref{sec:intro:related}, we briefly summarize
several lines of related work.
In Section~\ref{sec:tech-overview}, we then
present a more detailed technical overview
of our protocols, analysis framework, and main results, and in
Section~\ref{sec:discussion} we mention several open
questions. We defer most proofs to the subsequent sections. 

\subsection{Other Related Work}
\label{sec:intro:related}

\paragraph{Multi-armed bandits and online learning}
In the single-agent case, algorithms for
the multi-armed bandit and experts settings have
been studied extensively over the past several
decades, and for background, we refer the reader to
several introductory texts
\cite{cesa2006prediction, bubeck2012regret,
  lattimore2020bandit}.
Note that the cooperative bandit
setting of the present work is distinct
from other multi-player bandit settings
where agents seek to avoid choosing the
same action in a given round
\cite{boursier2019sic,
  bubeck2020coordination}.

\paragraph{Opinion and consensus dynamics}
Simple opinion dynamics for consensus, majority,
and other algorithmic tasks have been extensively
studied in the GOSSIP model and its common variants.
We refer the interested reader
to the survey of
Becchetti et al.~\cite{becchetti2020consensus}.
These types of protocols are more generally related
to other simple time-independent and memoryless
dynamics for averaging~\cite{10.1214/11-PS184,
  10.3150/12-BEJSP04} and
rumor-spreading~\cite{chierichetti2010rumour,
  chierichetti2010almost,
  fountoulakis2012ultra,
  giakkoupis2012rumor,
  giakkoupis2014tight}.

\paragraph{Global dynamics from local protocols}
Finally, we remark that our results and analysis
adds to a growing body of work that
analyze local protocols in distributed models
by relating their global evolution to 
centralized optimization algorithms
\cite{DBLP:conf/icalp/Mallmann-TrennM18,
  DBLP:conf/nips/EvenBBFHGMT21,
  panageas2016evolutionary,
  panageas2016mixing}.


\section{Technical Overview}
\label{sec:tech-overview}

\paragraph{Notation and Other Preliminaries}
Throughout, we deal with multiple sequences of vectors
indexed over rounds $t \in [T]$, for which we use the short
hand notation $\{\p^t\} := \{\p^0, \p^1, \dots, \p^t\}$.
We often compute expectation (resp., probabilities)
conditioned on two sequences $\{\pt\}$ and $\{\gt\}$
simultaneously, and we denote this double conditioning
by $\E_{t}[\cdot]$.
When we wish to condition just on a single vector $\pt$,
we will write $\E_{\pt}[ \cdot ]$.
Given $\p = (p_1, \dots, p_m)$,
we write $\E[\p]$ to denote the vector $(\E[p_1], \dots, \E[p_m])$.
We say \textit{with high probability} (w.h.p.) to mean
with probability at least $1- n^{-\Omega(1)}$
and we say \textit{n sufficiently large}
when we require $n$ larger than some absolute constant.
We use the notation $\softO(\cdot)$ to hide 
logarithmic dependencies.

\subsection{Adoption and Comparison Protocols}
\label{sec:tech-overview:protocols}

We begin by introducing two families of very simple
local protocols for this problem setting:

\begin{restatable}[\textbf{Adoption Protocols}]{protocolfam}{adoptionprotocol}
  \label{family:adoption}
  Fix a non-decreasing \textit{adoption function}
  $f: \R \to [0, 1]$. \\
  Then running an adoption protocol with function $f$
  means for each agent $u \in[n]$ the following:
  \begin{enumerate}[
    label={(\roman*)},
    topsep=0em,
    itemsep=-0.5em,
    leftmargin=2em]
  \item
    \textit{At round $t$, assume}:
    agent $u$ chose action $j$;  
    $u$ sampled agent $v$;
    and $v$ chose action $k \in [m]$.
  \item
    \textit{Then at round $t+1$}:
    agent $u$ chooses action $k$ with probability $f(g^t_k)$
    and action $j$ otherwise.
  \end{enumerate}
\end{restatable}

\begin{restatable}[\textbf{Comparison Protocols}]{protocolfam}{comparisonprotocol}
  \label{family:comparison}
  Fix a non-decreasing \textit{score function}
  $h: \R \to \R_{\ge 0}$. \\
  Then running a comparison protocol with function $h$
  means for each agent $u \in[n]$ the following:
  \begin{enumerate}[
    label={(\roman*)},
    topsep=0em,
    itemsep=-0.5em,
    leftmargin=2em]
  \item
    \textit{At round $t$, assume}:
    agent $u$ chose action $j$;  
    $u$ sampled agent $v$;
    and $v$ chose action $k \in [m]$.
  \item
    \textit{Then at round $t+1$}:
    defining
    $\rho_j \propto h\big(g^t_j\big)$
    and
    $\rho_k \propto h\big(g^t_k\big)$,
    agent $u$ chooses action $k$ with
    probability $\rho_k$ and action $j$
    with probability $\rho_j$. 
  \end{enumerate}
\end{restatable}

\noindent
In both families of protocols,
each agent's action choice at round $t+1$ is
either its own action choice at round $t$,
or that of its randomly sampled neighbor.
Thus both families of protocols satisfy
the memorlyess and time-independence attributes
of Property~\ref{property:memoryless}.
Moreover, in addition to the connection with opinion
dynamics, both adoption and comparison protocols 
are more generally related to frameworks from
social learning~\cite{ellison1995word,
  krafft2016human}, pairwise comparison
processes from evolutionary game theory
\cite{sandholm2010population,
  allen2014games, schmid2019evolutionary},
and algorithmic concepts like the
power-of-two-choices~\cite{mitzenmacher2001power}.

For concreteness, in this work we focus mainly on
the simplest instantiation of an adoption protocol,
$\betaadopt$, which uses the adoption function
$f_\beta(g) := \beta g$.
With this instantiation, an agent $u$ at round $t+1$
chooses the previous action of its neighbor with probabililty
$\beta g^t_j$, and chooses its own previous action
with probability $1 - \beta g^t_j$.
Defined formally:

\begin{restatable}{protocol}{betaadoptprotocol}
  \label{protocol:beta-adopt}
  Let $\betaadopt$ be the
  adoption protocol using the
  function
  $f_{\beta}(g) := \beta \cdot g$
  for $\beta \in (0, \tfrac{1}{\sigma}]$.
\end{restatable}

\paragraph{Evolution in Conditional Expectation}
Regardless of the function $f$,
we show that \textit{every} instantiation of an adoption
protocol is very closely related to
a certain (global) multiplicative weights update
process. To make this precise, consider
the sequence of distributions $\{\pt\}$ induced
when each agent runs an adoption protocol with
reward sequence $\{\gt\}$.
Then in conditional expectation, the coordinates
of $\p^{t+1}$ (corresponding to the fraction
of agents choosing each action at the next round)
evolve multiplicatively as follows:

\begin{restatable}{proposition}{adoptcondexp}
  \label{prop:adopt-condexp}
  Consider running an adoption protocol with
  function $f$.
  Let $f(\gt)$ denote the
  coordinate-wise application of $f$ on $\gt$.
  Then 
  $
  \E_t[p^{t+1}_j]
  =
  p^t_j
  \big(
  1 + f(g^t_j) - \langle \pt, f(\gt)\rangle
  \big)
  $
  for every $t$ and $j$.
\end{restatable}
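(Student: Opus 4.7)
The plan is to compute $\E_t[p^{t+1}_j]$ directly by linearity: since $p^{t+1}_j$ is the fraction of agents choosing action $j$ at round $t+1$, I can write $n \cdot p^{t+1}_j$ as a sum of $n$ indicators, one per agent, and evaluate each conditional expectation separately. Fix a round $t$ and condition on $\pt$ and $\gt$. For a fixed agent $u$, the conditioning pins down $u$'s action at round $t$; the remaining randomness is $u$'s uniform choice of neighbor (which, under the complete-graph \textit{GOSSIP} convention, delivers a neighbor whose action is distributed exactly as $\pt$) together with $u$'s local adoption coin.

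I would then split into two cases based on $u$'s action at round $t$. If $u$ chose $j$ at round $t$, then $u$ plays $j$ at $t+1$ in exactly two scenarios: the sampled neighbor also played $j$ (probability $p^t_j$), in which case $u$'s next action is deterministically $j$ because both candidate actions coincide; or the neighbor played some $k \neq j$ (probability $p^t_k$) and $u$ did \emph{not} adopt, which contributes $1 - f(g^t_k)$. Summing collapses to $1 - \langle \pt, f(\gt)\rangle + p^t_j f(g^t_j)$. If instead $u$ chose some $i \neq j$ at round $t$, the only route to action $j$ at $t+1$ is to have sampled a neighbor playing $j$ and then adopted, contributing $p^t_j \cdot f(g^t_j)$.

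To finish, I would weight the two case probabilities by the fractions $p^t_j$ and $1 - p^t_j$ of agents in each situation, sum over all agents, and divide by $n$. The $p^t_j f(g^t_j)$ terms from the ``stayed'' and ``switched-in'' contributions combine so that the final expression simplifies to $p^t_j [1 + f(g^t_j) - \langle \pt, f(\gt)\rangle]$, matching the claim.

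The main subtle point is not computational but modeling: the equality is exact only if the sampled neighbor's action is distributed exactly as $\pt$, which requires the convention (standard in the \textit{GOSSIP} model on complete graphs) that an agent may sample itself as a ``neighbor''; otherwise one picks up an $O(1/n)$ correction from excluding self, and the statement would have to become approximate. Beyond this, the only piece of the calculation needing care is the $k=j$ subcase in the first case, since both ``adopt the neighbor's action'' and ``keep my own'' then map to action $j$; handling this correctly is what lets the $p^t_j f(g^t_j)$ contributions combine cleanly rather than double-count or cancel.
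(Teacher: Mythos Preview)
Your proposal is correct and follows essentially the same approach as the paper: both write $\E_t[p^{t+1}_j]$ as an average of per-agent indicator probabilities, split into the two cases $c^t_u = j$ versus $c^t_u \neq j$, compute the same conditional probabilities, and simplify. Your remark about the self-loop convention is also exactly right and matches the paper's stated assumption that the complete communication graph contains self-loops.
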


The proof of the proposition is given in
Section~\ref{sec:protocol-details}.
In words, this result shows that in conditional expectation,
the coordinates of $\p^{t+1}$ generated via an adoption
protocol with function $f$ grow multiplicatively,
with weights depending on the strength
of $f(\g^t_j)$ relative to the global average
$\langle \p^t, f(\gt) \rangle$.
In Proposition~\ref{prop:cmp-condexp} of
Section~\ref{sec:protocol-details},
we also show that the coordinate-wise updates to $\p^{t+1}$
generated via a comparison protocol have a similar
multiplicative weights update structure
(but now depending on the values $\rho_j$ and $\rho_k$). 

Together, Propositions~\ref{prop:adopt-condexp}
and~\ref{prop:cmp-condexp} demonstrate that
\textit{every} adoption and comparison protocol follows
a more general \textit{zero-sum} multiplicative
weights update structure in conditional expectation. 
More specifically, the 
coordinates of $\hatp^{t+1} := \E_t[\p^{t+1}]$
evolve by
\begin{equation}
  \widehat p^{t+1}_j
  :=
  \E_t[p^{t+1}_j]
  \;=\;
  p^t_j \cdot \big(
  1 + F_j(\pt, \gt)
  \big) \;,
  \label{eq:condexp-zsmwu}
\end{equation}
where the set of $m$ functions $\{F_j\}^m_{j=1}$
satisfies the \textit{zero-sum} condition
$\sum_{j \in [m]} p_j \cdot F_j(\p, \g) = 0$
for all $\p \in \Delta_m$ and $\g \in \R^m$.
For example, for adoption protocols with function $f$,
Proposition~\ref{prop:adopt-condexp} shows that each
$F_j(\p, \g) = f(g_j) - \langle \p, f(\g)\rangle$.

\paragraph{Zero-Sum Multiplicative Weights Update}
The form of expression~\eqref{eq:condexp-zsmwu}
is the key structural property that we use to
analyze the regret bounds and consensus guarantees
of our adoption and comparison protocols,
and it more generally demonstrates a deeper
connection between the global evolution of
these local protocols and multiplicative weights
update procesess.

However, while the coordinates of the distribution
$\E_t[\p^{t+1}]$ are updated multiplicatively
as in~\eqref{eq:condexp-zsmwu}, these updates
are applied to the \textit{realized} distribution $\pt$,
and thus the sequence  $\{\E_t[\p^{t+1}]\}$ may not
be \textit{smooth}. 
Nevertheless, to analyze and bound the regret
of the sequence $\{\pt\}$, we can consider an idealized
process $\{\qt\}$ whose coordinates \textit{do}
evolve by composing this update structure in
each round using the same family $\{\calF\}$.
We define these \textit{zero-sum multiplicative weights update}
(zero-sum MWU) processes as follows:

\begin{definition}[Zero-Sum MWU]
  \label{def:zsmwu}
  Let $\calF = \{F_j\}$ be
  a family of $m$ function
  $F_j : \Delta_m \times \R^m \to [-1, 1]$ satisfying
  the \textit{zero-sum condition}
  $\sum_{j \in [m]} \; q_j \cdot F_j(\q, \g) = 0$
  for all $\q \in \Delta_m$ and $\g \in \R^m$.
  Then we say $\{\qt\}$ is a 
  \textit{zero-sum MWU process}
  with reward sequence $\{\gt\}$
  if for all $t \in [T]$ and $j \in [m]$:
  $
  q^{t+1}_j
  \;=\;
    q^t_j \cdot \big( 1 + F_j\big(\qt, \gt\big) \big)$.
\end{definition}

Compared to the standard (linear) versions
of multiplicative weights update
processes~\cite{DBLP:journals/toc/AroraHK12},
the zero-sum condition
$\sum_{j \in [m]} q_j \cdot F_j(\q, \g)$
ensures that each $\q^{t+1}$
always remains a distribution,
\textit{without} a renormalization step
(i.e., the simplex $\Delta_m$ is invariant to $\{\qt\}$).
In addition, zero-sum MWU processes
can be viewed as a discrete time approximation
of the continuous time \textit{replicator dynamics},
which are systems of the form
$(d q^t_j / dt) = ( u(g^t_j) - \langle \qt, u(\g^t_j)\rangle)$
for some utility function $u$,
which are very well studied in the context
of evolutionary dynamics \cite{schuster1983replicator,
  hofbauer1998evolutionary,
  cabrales2000stochastic,
  sandholm2010population}.
On the other hand, \textit{discrete-time} processes
with these zero-sum update factors are not
as well-studied.

\subsection{Analysis Framework for Bounding $R(T)$}
\label{sec:tech-overview:framework}

In order to analyze the sequences $\{\pt\}$
generated by our families of protocols,
we leverage the connection between the conditionally
expected updates to $\{\pt\}$ and zero-sum MWU processes.
In particular, we construct a general analysis
framework that couples the sequence $\{\pt\}$
with a zero-sum MWU process $\{\qt\}$ that
starts at the same initial distribution, uses the
same family $\calF$, and runs on the same reward sequence.
This framework is leveraged to prove our main results,
and we proceed to introduce its components in more detail:

\begin{restatable}[Coupled Trajectories]{definition}{defcoupling}
  \label{def:coupling}
  Let $\calF = \{F_j\}$ be a family of zero-sum
  functions as in Definition~\ref{def:zsmwu}.
  Given a reward sequence $\{\gt\}$,
  consider the sequences of distributions
  $\{\pt\}$, $\{\hatpt\}$, and $\{\qt\}$, each initialized
  at $\p^0  \in \Delta_m$, such that
  for all $j \in [m]$:
  \begin{align}
    q^{t+1}_j
    &\;:=\;
      q^t_j \cdot 
      \big(1 + F_j(\qt, \gt)\big) \;,
      \label{qt-def}   \\
    \widehat p^{t+1}_j
    &\;:=\;
      p^t_j \cdot
      \big(1 + F_j(\pt, \gt) \big) \;,
      \label{hatpt-def}
  \end{align}
  and where $\p^{t+1}$ is the empirical average of
  $n$ i.i.d. samples from the distribution $\hatp^t$. 
\end{restatable}

Given this coupling definition, a straightforward
calculation shows that we can over-approximate the
regret $R(T)$ of the sequence $\{\pt\}$
by (i) the regret of the sequence $\{\qt\}$ and
(ii) an additional error term incurred by this coupling.
More formally, given the setting of
Definition~\ref{def:coupling}, we have:
\begin{proposition}
  \label{prop:regret-decompose}
  $
  R(T)
    \le
    \widehat R(T)
    :=
    \max_{j \in [m]}
    \sum_{t\in [T]}
    \mu^t_j - \E[\langle \qt, \gt \rangle]
    +
    \sigma \sum_{t \in [T]}
    \E\|\qt - \pt\|_1 
  $.
\end{proposition}

\noindent
Given this proposition, our approach for bounding
$R(T)$ is to derive bounds for the
regret of the process $\{\qt\}$
and for the error terms separately. We describe
these approaches as follows:

\paragraph{Regret Bounds for Zero-Sum MWU Processes}
To bound the regret of a zero-sum MWU process $\{\qt\}$,
the key step is to relate the function value
$F_j(\q, \g)$ in conditional expectation to the quantity
$\mu^t_j - \langle \q, \bfmu^t \rangle$.
To this end we make the following assumptions on $\calF$:

\begin{restatable}{assumption}{fparams}
  \label{ass:F-params}
  Let $\calF = \{F_j\}$ be a family of functions
  as in Definition~\ref{def:zsmwu},
  and let $\{\gt\}$ be any reward sequence.
  Assume there exists
  $\alpha \in (0, 1/2]$,
  $\delta \in [0, 1]$, and $L > 0$ such that
  \begin{enumerate}[
    label={(\roman*)},
    topsep=0.5em,
    leftmargin=2.5em,
    itemsep=0em,
    ]
  \item
    for all $\p \in \Delta_m$ and $j \in [m]$:\;
    $
    \frac{\alpha}{2}
    \big(\mu^t_j - \langle \q, \bfmu^t \rangle - \delta\big)
    \;\le\;
    \E_\q[F_j(\q, \gt)]
    \;\le\;
    \frac{\alpha}{2}
    \big(\mu^t_j - \langle \q, \bfmu^t \rangle + \delta\big)
    $
  \item
    for all $\p, \q \in [m]$:
    $
    |F_j(\q, \gt) - F_j(\p, \gt)| \le L \cdot \|\p - \q\|_1
     $ .
   \end{enumerate}
\end{restatable}

Roughly speaking, condition (i) of the assumption
ensures that in (conditional) expectation,
each $F_j(\q, \g)$ is multiplicatively correlated
with the regret of action $j$ at round $t$,
up to some additive slack $\delta$,
and condition (ii) ensures that each
$F_j$ is $L$-Lipschitz. We emphasize that many natural
instantiations of the adoption and comparison protocols
from Section~\ref{sec:tech-overview:protocols}
satisfy the conditions of the assumption. 
For example, for the $\betaadopt$ protocol,
we show the following
(proved in Section~\ref{sec:protocol-details}):

\begin{restatable}{proposition}{betaadoptparams}
  \label{prop:beta-adopt-params}
  The family $\calF$ induced by the $\betaadopt$ protocol
  satisfies Assumption~\ref{ass:F-params}
  with 
  $\alpha := 2\beta$, 
  $\delta := 0$,
  and $L := 2$,
  for any $0 < \beta \le \min(1/4, 1/\sigma)$.
\end{restatable}

Under Assumption~\ref{ass:F-params},
we prove (in Section~\ref{sec:zsmwu-regret-proof})
the following bound on the expected 
regret of a zero-sum MWU process,
which is parameterized with respect to the
constants $\alpha$ and $\delta$:

\begin{restatable}[Regret of Zero-sum MWU]
  {theorem}{zsmwuregret} 
  \label{thm:zsmwu-regret}
  Consider a process $\{\qt\}$
  from Definition~\ref{def:zsmwu}
  with reward sequence $\{\gt\}$
  and using a family $\calF$ that satisfies
  Assumption~\ref{ass:F-params}
  with parameters $\alpha$ and $\delta$.
  Assume that $q^0_j \ge \rho > 0$.
  Let $\widetilde R(T) := \max_{j \in [m]}
  \sum_{t \in [T]} \mu_j - \E[\langle\qt,\gt\rangle]$.
  Then:
  \begin{itemize}[
    label=-,
    topsep=0.5em,
    itemsep=0.1em,
    leftmargin=1em
    ]
  \item
    In the stationary reward setting
    and assuming $\delta = 0$:
    $\widetilde R(T)
    \;\le\;
    \frac{4}{\alpha} \cdot \log (1/\rho)
    $.
  \item
    Moreover, in the adversarial reward setting: 
    $\widetilde R(T)
    \;\le\;
    \frac{2}{\alpha} \cdot \log (1/\rho)
    + 2 (\alpha + \delta) T$.
  \end{itemize}
\end{restatable}

To interpret these bounds,
suppose $q^0 = \1/m$ and that $\delta = 0$.
Then with stationary rewards,
the regret of $\{\qt\}$ is at most
$O(\log m)$ (i.e., constant with respect to $T$).
Additionally, for general adversarial rewards,
the regret of $\{\qt\}$ grows like $O((\log m)/
\alpha + \alpha T)$.
If $\alpha$ can be tuned to $\Theta(\sqrt{(\log m)/ T})$,
then this regret again scales sublinearly like
$O(\sqrt{T \log m})$.

Thus Theorem~\ref{thm:zsmwu-regret} shows these
zero-sum MWU processes obtain regret that match
the optimal rates (in both reward settings) for
standard MWU processes (cf., \cite{cesa2006prediction,
  DBLP:journals/toc/AroraHK12,
  lattimore2020bandit}).

\paragraph{Bounds on Coupling Error}

The second step is to
bound the error terms
$\sum_{t \in [T]} \E\|\pt - \qt\|_1$.
For this, we leverage the fact that
each $\E_t[\pt]$ and $\qt$
are both defined with the same family $\calF$
and depend similarly on $\g^{t-1}$.
Then using condition (ii) of Assumption~\ref{ass:F-params}
along with standard concentration bounds,
we derive an overall bound on
$\sum_{t\in[T]} \E\|\pt - \qt\|_1$, which we state
as Lemma~\ref{lem:coupling-error} in
Section~\ref{sec:coupling-error-details}.

\paragraph{General $\tau$-step Regret Bounds}
By the decomposition of $R(T)$
from Proposition~\ref{prop:regret-decompose},
we can then combine the zero-sum MWU regret
bound of Theorem~\ref{thm:zsmwu-regret} and
the coupling error bound of Lemma~\ref{lem:coupling-error} 
to obtain the following,
$\tau$-step regret guarantee
for the process $\{\pt\}$:

\begin{restatable}{proposition}{taustep}
  \label{prop:tau-step-regret}
  Let $R(\tau)$ be the regret of the sequence
  $\{\pt\}$ induced by a local protocol
  $\calP$ that satisfies 
  Assumption~\ref{ass:F-params}
  with parameters $\alpha$, $\delta$ and $L$.
  Let $\kappa := 3+L$, fix $c \ge 1$,
  and consider $n$ sufficiently large.
  Let $\widetilde R(\tau)$
  be the regret of the corresponding
  zero-sum MWU process $\{\qt\}$.
  Then
  $
  R(\tau)
  \;\le\;
  \widetilde R(\tau)
  +
  O\big(
  \sigma \kappa^\tau
  \cdot \sqrt{\frac{mc \log n}{n}}
  +\frac{\sigma\tau}{n^c}
  \big)
  $.
\end{restatable}

Equipped with this general analysis framework,
we can now formally present our main results
on the regret guarantees and consensus behavior
of our protocols.

\subsection{Main Result:
  Regret Bounds and Consensus for Stationary Rewards}
\label{sec:tech-overview:regret}

The coupling introduced in the previous section
provides a general framework for
analyzing the regret of any adoption or comparison protocol.
For concreteness and simplicity,
we focus on the guarantees of the $\betaadopt$ protocol
introduced in Section~\ref{sec:tech-overview:protocols}.
We prove this adoption protocol obtains only
\textit{constant regret} (with respect to $T$) 
in the stationary reward setting,
which provides an affirmative answer to first
main algorithmic question (Q1) from
Section~\ref{sec:intro:setting}.

\begin{restatable}{theorem}{stationaryregret}
  \label{thm:stationary-regret}
  Let $R(T)$ be the regret of the $\betaadopt$
  protocol in the stationary reward setting
  with $\beta := \min(1/4, 1/\sigma)$.
  Then assuming $m = o(\log^{1.5} n)$
  and $n$ sufficiently large,
  it holds with high probability that
  $R(T) \le O\big(\sigma \log m + \log^3 n \big)$.
\end{restatable}

\paragraph{Proof sketch of
  Theorem~\ref{thm:stationary-regret}:}
As mentioned, we use
the general $\tau$-step regret bound
of the analysis framework
from Section~\ref{sec:tech-overview:framework}
in conjunction with several additional arguments
depending on the magnitude of the time horizon $T$.
Specifically, 
we decompose the regret $R(T)$
into three distinct phases.
For this, we let
$\Tone :=\sqrt{n} \cdot \frac{\log m}{\sqrt{m \log n}}$
and $\Ttwo := 2 \sqrt{n} \cdot \log n$,
and we moreover define:
\begin{align*}
  \Rone
  &\;:=\;
    {\textstyle
    \; \sum_{t=1}^{\min(\Tone, T)}\;
    \mu_1 -\E[\langle \pt, \gt\rangle]
    } \;, \\
  \Rtwo
  &\;:=\;
    {\textstyle
    \; \sum_{t=\Tone}^{\min(\Ttwo, T)}\;
    \mu_1 - \E[\langle \pt, \gt\rangle]
    }
    \;\;\;\text{if $T \ge \Tone$, and 0 otherwise,} \\
  \Rthree
  &\;:=\;
    {\textstyle
    \; \sum_{t=\Ttwo}^{T}\;
    \mu_1 - \E[\langle \pt, \gt\rangle]
    }
    \;\;\qquad
    \text{if $T \ge \Ttwo$, and 0 otherwise}.
\end{align*}

Using this decomposition, we can write
$R(T) = \Rone + \Rtwo + \Rthree$,
and we derive bounds 
on each of these quantities individually:

\begin{itemize}[
  itemsep=0pt,
  leftmargin=1.5em,
  ]
\item
  \textbf{Bound on $\Rone$}:
  to start, in the first $\Tone$ rounds, we repeatedly
  apply the $\tau$-step regret bound of
  Proposition~\ref{prop:tau-step-regret}
  via an epoch-based approach similar to that
  of Celis et al.~\cite{DBLP:conf/podc/CelisKV17}.
  However, the core tool in obtaining
  a sharp bound in this first phase is to leverage
  the structure of the sequence $\{\pt\}$
  generated by the $\betaadopt$ protocol.
  In particular, we derive a lower bound on the growth
  of $p^t_1$ over time, and this allows us to
  show that the regret in each of the
  $\tau$-round epochs is sufficiently small.
  Together, this leads to the bound
  $\Rone \le O(\sigma \log m)$.
\item
  \textbf{Bounds on $\Rtwo$ and $\Rthree$}:
  For $T \ge \Tone$, we require additional
  techniques to bound the quantities
  $\Rtwo$ and $\Rthree$.
  For this, our key technical insight is to show
  that, with high probability,
  within at most $\Tone$ rounds, the mass $p^t_1$
  is at least $1 - O((\log^2 n)/\sqrt{n})$
  for every remaining round.
  By a straightforward calculation, 
  each subsequent per-round regret
  is then at most $O((\log^2 n)/\sqrt{n})$. 
  
  We then resort to using bounds on expected hitting times
  for supermartingale processes~\cite{lengler2020drift} to further show
  that, with high probability, 
  the mass $p^t_1$ reaches and remains at 1 within
  $\Ttwo$ rounds. 
  Together, this implies that both
  $\Rtwo \le O(\log^3 n)$ and $\Rthree = 0$ with high probability.
\end{itemize}
Summing these individual bounds yields
$R(T) \le O(\sigma \log m + \log^3 n)$ w.h.p.,
which holds for any $T \ge 1$.
The structure of the three phases and corresponding
regret guarantees are summarized in
Figure~\ref{fig:regret-phases}.

\begin{figure}[htb!]
  \centering
  \includegraphics[width=0.9\linewidth]{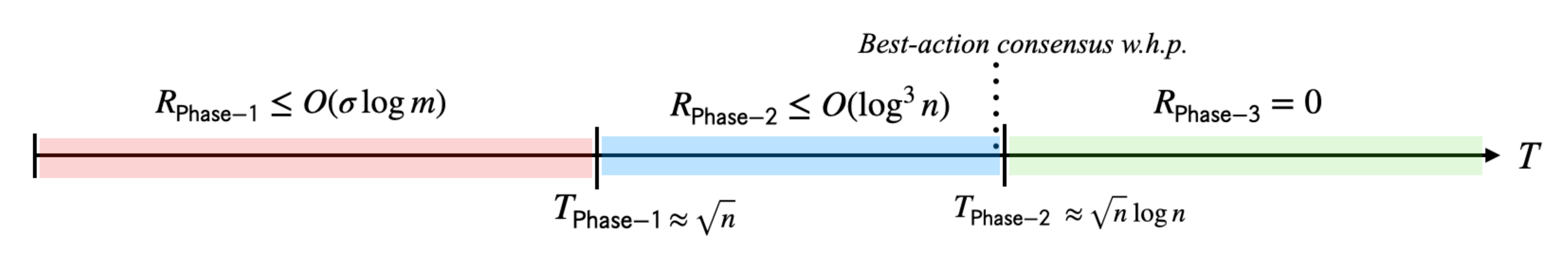}
  \caption{\small
    A depiction of the three phase structure used
    in the proof of Theorem~\ref{thm:stationary-regret},
    and the corresponding cumulative regret bound
    of each phase. The dotted vertical line denotes
    the time at which best-action consensus
    is reached with high probability
    in Theorem~\ref{thm:consensus}.}
  \vspace*{-0.5em}
  \label{fig:regret-phases}
\end{figure}

\paragraph{Reaching Best-Action Consensus}
In addition to the sublinear regret guarantee for 
$\betaadopt$, as a byproduct of the proof of
Theorem~\ref{thm:stationary-regret}
(and in particular the bound on $\Rtwo$),
we establish the following additional guarantee:
the population reaches \textit{best action consensus}
after at most $\softO(\sqrt{n})$ rounds w.h.p..

\begin{restatable}{theorem}{consensus}
  \label{thm:consensus}
  Consider the $\betaadopt$ protocol
  in the stationary reward setting of
  Theorem~\ref{thm:stationary-regret}.
  Then $p^t_1 = 1$  for all rounds
  $t \ge 2\sqrt{n} \log n$ simultaneously
  with probability at least $1 - O(1/\sqrt{n})$.
\end{restatable}

Thus Theorem~\ref{thm:consensus} answers affirmatively
the second main algorithmic question (Q2)
from Section~\ref{sec:intro:setting}.
Together with the regret bound of
Theorem~\ref{thm:stationary-regret}, these
two results demonstrate the surprising ability
of a simple opinion-dynamic-like protocol to globally
solve the online learning task of the present setting.
The full proofs of Theorem~\ref{thm:stationary-regret}
and Theorem~\ref{thm:consensus} are developed
in Section~\ref{sec:stationary-regret-proof},
and while these results are stated specifically
for the $\betaadopt$ protocol, we suspect other
instantiations of the adoption and comparison
protocol families from
Section~\ref{sec:tech-overview:protocols} yield
similar guarantees. 
  
\subsection{Robustness to Adversarial Rewards}

As a consequence of the connection between
the evolution of our local adoption and comparison
protocols and zero-sum MWU processes, we additionally
show that these protocols obtain sublinear regret
in an \textit{adversarial} reward setting
(formally defined in
Section~\ref{sec:adversarial-regret-details}). 
Specifically, for the $\betaadopt$ protocol,
we obtain the following regret
bound scaling like $\softO(\sqrt{T})$:
\begin{restatable}{theorem}{regretadversarial}
  \label{thm:regret-adversarial}
  Let $R(T)$ be the regret of the $\betaadopt$
  protocol running in a general adversarial
  reward setting with $\beta := \sqrt{(\log m)/T}$.
  Let $T \le (0.5 - \eps) \log_5 n$
  for any $\eps \in (0, 0.5)$,
  and assume $m= o(\log^{1.5} n)$. 
  Then for $n$ sufficiently large:
  $R(T) \le O(\sqrt{T \log m})$. 
\end{restatable}

The proof of the theorem again uses the
bound of Proposition~\ref{prop:tau-step-regret}
from the general analysis framework,
and we present the details in 
Section~\ref{sec:adversarial-regret-details}.
While the result shows the additional ability 
of the $\betaadopt$ protocol to obtain sublinear
regret in this more challenging reward setting,
the theorem restricts the time horizon $T$ to grow
only logarithmically in $n$.
We further prove in Proposition~\ref{prop:adversarial-lower}
that for the $\betaadopt$ protocol,
there exist adversarial reward sequences
that force $R(T) \ge \Omega(T)$ once
$T \ge \widetilde \Omega(\sqrt{n})$.
We leave the study of whether other simple protocols
can obtain sublinear regret for longer time horizons
in the adversarial reward setting as future work.


\section{Discussion and Open Problems}
\label{sec:discussion}

In this work, we proved that a simple opinion-dynamic-like 
protocol is a no-regret learning algorithm
in the cooperative bandit setting.
Moreover, we showed this protocol simultaneously reaches
best-action consensus in the stationary reward setting, 
and also obtains sublinear regret guarantees in the
adversarial reward setting.
Our results leave open several lines of questions:
\begin{itemize}[
  leftmargin=1.5em,
  topsep=0.5em,
  itemsep=0em
]
\item
  First, notice that Theorems~\ref{thm:stationary-regret},
  \ref{thm:consensus}, and~\ref{thm:regret-adversarial}
  all assume the size of the action set $m$ grows
  no more than polylogarithmically in $n$.
  Establishing whether this constraint
  can be loosened is left for future work.
 
\item
  In this work, we considered the complete
  communication graph as an initial case study
  for understanding the regret and consensus guarantees
  of our simple families of protocols.
  Thus a second immediate question is to quantify the
  regret of these simple protocols when the
  underlying communication graph is non-complete.
  
  For general, non-complete topologies,
  we would expect the regret of our algorithms to
  have some dependence on the expansion or other
  combinatorial properties of the underlying graph.
  This is a phenomenon that appears in
  other cooperative bandit works within different
  communication models~\cite{cesa2016delay,
    bar2019individual,martinez2019decentralized},
  as well as in the convergence guarantees of 
  opinion dynamics for other tasks in
  the GOSSIP model~\cite{chierichetti2010rumour,
    giakkoupis2012rumor,
    giakkoupis2014tight,
    DBLP:conf/icalp/Mallmann-TrennM18}.
\item
  Finally, it remains open to establish whether
  other memoryless and time-independent protocols
  can obtain sublinear regret guarantees
  in the adversarial reward setting beyond the
  $O(\log n)$-round time horizon of
  Theorem~\ref{thm:regret-adversarial}. 
\end{itemize}


\section{Details on Local Protocols}
\label{sec:protocol-details}

In this section, we provide more details on
the families of adoption and comparison protocols
introduced in Section~\ref{sec:tech-overview:protocols}.
To start, we make the following remark regarding the 
neighbor sampling mechanism of the GOSSIP model
that is assumed in the problem setting:

\begin{remark}
  \label{remark:gossip}
  Recall in this work that we consider a \textit{complete}
  communication graph. We assume further this communication
  graph contains self-loops, which is a standard assumption in
  the GOSSIP model~\cite{boyd2006randomized, shah2009gossip,
    becchetti2014plurality,becchetti2020consensus}.
  Moreover, we use the terminology ``the neighbor
  sampled by an agent $u$'' to refer to the uniformly random
  neighbor that agent $u$ exchanges information with under
  the GOSSIP-style communication of the problem setting.
  
  However, we remark these random information exchanges should
  be viewed as being ``scheduled'' by the model. In other words,
  in the GOSSIP model, the agent $u$ does not
  explicitly perform the neighbor sampling itself, 
  but rather the model stipulates that in each round,
  every agent $u$ has a (one-sided) information exchange
  with a uniformly random neighbor.
\end{remark}

\subsection{Evolution of Adoption Algorithms}
\label{app:protocols:adoption}

For adoption algorithms, we
prove Proposition~\ref{prop:adopt-condexp},
which derives the structure of their
coordinate-wise updates in conditional expectation.
We restate the both the
definition of Adoption Protocols and the
proposition here for convenience:

\adoptionprotocol*

\adoptcondexp*

\begin{proof}
  First, letting $c^{t+1}_u \in [m]$ denote the action chosen by
  agent $u \in [n]$ in round $t+1$, observe that
  \begin{equation*}
    \E_t [p^{t+1}_j]
    \;=\;
    \frac{1}{n} \sum_{u \in [n]}
    \Pr_t[ c^{t+1}_u = j ] \;,
  \end{equation*}
  which follows from the fact that $p^{t+1}_j$ is the average of the
  $n$ indicators $\1\{c^{t+1}_u = j\}$.
  By the local symmetry of the algorithm and communication model,
  $\Pr_t[ c^{t+1}_u = j ]$ is identical for all agents $u$.
  However, this value is dependent on $c^t_u$ (i.e., the
  action choice of $u$ at the previous round $t$).
  
  Thus using the law of total probability,
  for any agent $u$ we can write
  \begin{align*}
    \Pr_t[c^{t+1}_u = j]
    \;=\;
    \1\{ &c^t_u = j\}
           \cdot \Pr_t [c^{t+1}_u = j | c^t_u = j]  \\
         &+ \sum_{k \neq j \in [m]}
           \1\{ c^t_u = k\} \cdot \Pr_t [c^{t+1}_u = j  | c^t_u = k] \;.  
  \end{align*}
  Now fix agent $u$, and let $v \in [n]$ denote the
  agent that $u$ samples in round $t$.
  Now recall from the definition of the algorithm that
  if $c^t_u = k \neq j$, then $c^{t+1}_u = j$ with probability $f(g^t_j)$
  only if agent $v$ chose action $j$ in round $t$, i.e., $c^t_v = j$.
  On the other hand, if $c^t_u = j$, then $c^{t+1}_u = j$ either
  if $c^t_v = j$, or if $c^t_v = k \neq j$ and agent $u$
  \textit{rejects} adopting action $k$ with probability $1 - f(g^t_k)$.

  Thus we have
  \begin{align*}
    \Pr_t[c^{t+1}_u = j | c^t_u = j]
    &\;=\;
      p^t_j + \sum_{k \neq j \in [m]} p^t_k \cdot (1 - f(g^t_k)) \\
    \text{and}\;\;\;
    \Pr_t[c^{t+1}_u = j | c^t_u = k]
    &\;=\;
      p^t_j \cdot f(g^t_j)
      \;\;\;\text{for $k \neq j$}\;.
  \end{align*}
  Combining these cases, noting also that
  $\frac{1}{n}\sum_{u \in [v]} \1\{ c^t_u = k\} = p^t_k$ for any $k \in [m]$,
  and using the fact that $\sum_{k \in [m]} p^t_k = 1$, we
  can then write
  \begin{align*}
    \E_t[p^{t+1}_j]
    &\;=\;
      p^t_j \cdot
      \big(
      p^t_j + \sum_{k \neq j \in [m]} p^t_k \cdot (1- f(g^t_k))
      \big)
      +
      \sum_{k \neq j \in [m]}
      p^t_k \cdot 
      \big(
      p^t_j \cdot f(g^t_j)
      \big) \\
    &\;=\;
      p^t_j
      \cdot
      \Big(
      1 + \sum_{k \neq j \in [m]}
      p^t_k \cdot \big(f(g^t_j) - f(g^t_k)\big)
      \Big) \\
    &\;=\;
      p^t_j
      \cdot
      \Big(
      1 + f(g^t_j) - \big\langle \pt, f(\gt) \big\rangle
      \Big) \;,
  \end{align*}
  which concludes the proof. 
\end{proof}

Importantly, we also verify that such multiplicative updates
in every coordinate $j$ still lead to a proper distribution:
for this, it is easy to check that
\begin{align*}
  \sum_{j \in [m]}
  \E_t[p^{t+1}_j]
  &\;=\;
  \sum_{j \in [m]}
  p^t_j
  \cdot
  \Big(
  1 + f(g^t_j) - \big\langle \pt, f(\gt) \big\rangle 
  \Big) \\
  &\;=\;
  \sum_{j \in [m]} p^t_j
  +
  \big\langle \pt, f(\gt)\big\rangle - \big\langle \pt, f(\gt)\big\rangle
  \;=\;
  1 \;,
\end{align*}
which holds since $\pt$ is a distribution. \\

\subsubsection{Example Instantiation:
  the $\betaadopt$ protocol}

As introduced in Section~\ref{sec:tech-overview},
the simplest instantiation of
an adoption protocol is $\betaadopt$:

\betaadoptprotocol*

Using the zero-sum MWU terminology of
Defnition~\ref{def:zsmwu} and Assumption~\ref{ass:F-params},
observe by Proposition~\ref{prop:adopt-condexp}
that running this protocol induces the
zero-sum family $\calF = \{F_j\}_{j \in [m]}$ where
$
 F_j(\p, \g)
  \;=\;
  \sum_{j \in [m]}
  p_j \cdot
  (f_{\beta}(g_j) - f_{\beta}(g_k))
$
for each $j \in [m]$. 
For $\betaadopt$, this simplifies to
\begin{equation}
 F_j(\p, \g)
 \;=\;
 \beta \cdot (g_j - \langle \p, \g \rangle)
 \label{eq:disc-adopt-Fj}
\end{equation}
for all $j \in [m]$. 
Then we can show this algorithm statisfies
Assumption~\ref{ass:F-params} as follows:

\betaadoptparams*

\begin{proof}
  By expression~\eqref{eq:disc-adopt-Fj}, and taking
  expectations conditioned on $\p$, we have
  \begin{equation*}
    \E_{\p}[F_j(\p, \g)]
    \;=\;
    \beta \mu_j - \beta\langle \p, \bfmu\rangle
    \;=\;
    \beta \cdot (\mu_j - \langle \p, \bfmu\rangle)
  \end{equation*}
  where we define $\bfmu = \E[\g]$.
  Thus $\betaadopt$ trivially satisfies
  Assumption~\ref{ass:F-params} with
  $\delta = 0$ and 
  $\alpha = 2\beta$
  for $\beta \le \min\{\frac{1}{4}, \frac{1}{\sigma}\}$.
  In particular, the constraint $\beta \le \frac{1}{4}$ ensures
  $\alpha \le \frac{1}{2}$,  
  and the constraint $\beta \le \frac{1}{\sigma}$ ensures
  $f_{\beta}(g_j) \le 1$. 
  Also, assuming
  $\beta \le \min\{\frac{1}{4}, \frac{1}{\sigma}\}$, 
  notice that $|f_{\beta}(g_j) - f_{\beta}(g_k)| \le 2$
  for any $g_j, g_k \in [0, \sigma]$, and thus
  setting $L=2$ is sufficient to satisfy
  condition (ii) of the assumption. 
\end{proof}

We remark that other natural choices of adoption functions
$f$ include the sigmoid function, but for simplicity
we focus mainly on the guarantees obtained by
the $\betaadopt$ protocol.

\subsection{Evolution of Comparison Protocols}
\label{app:protocols:comparison}

Recall the definition of comparison protocols
from Section~\ref{sec:tech-overview:protocols}:

\comparisonprotocol*

Natural choices for the score function $h$ include
an exponential function. However, regardless
of the choice of $h$, the conditionally expected
coordinate-wise updates to $\pt$ have
a multiplicative structure similar to that
of Adoption protocols. 
Specifically, we derive
the following proposition, analogous to Proposition~\ref{prop:adopt-condexp}:

\begin{restatable}{proposition}{cmpcondexp}
  \label{prop:cmp-condexp}
  Let $\{\pt\}$ be the sequence induced when every
  agent runs the comparison protocol with score function $h$
  and reward seqeunce $\{\gt\}$. 
  Furthermore, for any $\g \in \R^m$ and $j \in [m]$,
  let $H(\g, j) \in [-1, 1]^m$ be the
  $m$-dimensional vector whose $k$'th coordinate
  is given by $\rho_j - \rho_k$. 
  Then 
  $
  \E_t[ p^{t+1}_j]
  =
  p^{t}_j
  \cdot
  \big(
  1 + \langle \pt, H(\gt, j)\rangle
  \big) 
  $
  for every $t$ and $j$.
\end{restatable}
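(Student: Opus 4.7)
The plan is to reduce the claim to a single-agent calculation via linearity of expectation, and then compute the probability that any given agent selects action $j$ at round $t+1$ by conditioning on both its own round-$t$ action and the action of its sampled neighbor. Writing $p^{t+1}_j = \frac{1}{n}\sum_u X_u$ where $X_u$ is the indicator that agent $u$ chooses action $j$ at round $t+1$, symmetry of the protocol implies that every agent contributes identically in expectation, so it suffices to compute $\Pr_t[X_u = 1]$ for a single agent $u$. Under the complete-graph gossip sampling model, and conditional on $\pt$, the round-$t$ action of $u$'s sampled neighbor $v$ is distributed as $\pt$ and independent of $u$'s own action.

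The key step is a case analysis over the pair $(A_u^t, A_v^t)$ of actions chosen at round $t$ by agent $u$ and its sampled neighbor $v$. Writing $\rho_j^{(\ell,k)} := h(g^t_j)/(h(g^t_\ell) + h(g^t_k))$ for the normalized score of action $j$ in a pair $(\ell,k)$, I would identify four cases: if both $u$ and $v$ chose $j$, then $u$ deterministically picks $j$ at round $t+1$; if $u$ chose $j$ and $v$ chose $k \ne j$, then $u$ picks $j$ with probability $\rho_j^{(j,k)}$; if $u$ chose $\ell \ne j$ and $v$ chose $j$, symmetrically with probability $\rho_j^{(\ell,j)}$; otherwise the probability is $0$. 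Weighting these cases by $p^t_\ell \cdot p^t_k$ and combining the two symmetric middle cases (renaming $\ell \to k$ and using that the denominator of $\rho_j^{(\cdot,\cdot)}$ is symmetric in its arguments) yields
\[
\Pr_t[X_u = 1] \;=\; (p^t_j)^2 \,+\, 2\,p^t_j \sum_{k \ne j} p^t_k \cdot \frac{h(g^t_j)}{h(g^t_j) + h(g^t_k)}.
\]

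To match the target formula, I would expand $\langle \pt, H(\gt, j)\rangle = \sum_k p^t_k (\rho_j - \rho_k)$, noting that the $k = j$ term vanishes. For $k \ne j$, the identity $\rho_j + \rho_k = 1$ on the two-element simplex for the pair $(j,k)$ implies $\rho_j - \rho_k = 2\rho_j^{(j,k)} - 1$. Substituting this and using $\sum_{k \ne j} p^t_k = 1 - p^t_j$ yields $1 + \langle \pt, H(\gt, j)\rangle = p^t_j + 2\sum_{k \ne j} p^t_k \rho_j^{(j,k)}$; multiplying by $p^t_j$ then exactly reproduces the expression displayed above.

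The main obstacle is correctly tracking the factor of $2$ that emerges from the two symmetric interaction patterns (``$u$'s prior action is $j$ and $v$'s is $k$'' versus ``$u$'s prior action is $\ell$ and $v$'s is $j$'') and verifying that the compact definition $H(\gt, j)_k = \rho_j - \rho_k$ absorbs this factor via the two-element simplex identity $\rho_j - \rho_k = 2\rho_j - 1$. A secondary care-point is the neighbor-sampling convention: the exact equality in the proposition presupposes that, conditional on $\pt$, the neighbor's round-$t$ action is distributed exactly as $\pt$ (e.g., uniform sampling with self-loops, or an equivalent large-$n$ convention), mirroring the setting already used in Proposition~\ref{prop:adopt-condexp}.
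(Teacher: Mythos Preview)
Your proposal is correct and follows essentially the same approach as the paper's proof: both reduce to a single-agent computation by linearity, condition on the round-$t$ actions of the agent and its sampled neighbor, and combine via the law of total probability. The only cosmetic difference is that the paper conditions first on $c^t_u$ (two cases) and then sums over the neighbor's action, arriving directly at the form $p^t_j\big[1 + \sum_{k\neq j} p^t_k\,\tfrac{h(g^t_j)-h(g^t_k)}{h(g^t_j)+h(g^t_k)}\big]$, whereas you condition on the pair $(c^t_u,c^t_v)$ at once, merge the two symmetric ``one is $j$'' cases to get the factor of $2$, and then match the target via the identity $\rho_j-\rho_k = 2\rho_j - 1$; these are equivalent bookkeepings of the same calculation.
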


\begin{proof}
  Fix $j \in [m]$ and $t \in [T]$. Again let $c^t_i \in [m]$
  denote the action chosen by agent $i \in [n]$ at round $t$. 
  Then observe that we can write
  \begin{align*}
    \E_t\big[ p^{t+1}_j\big]
    \;=\;
      \frac{1}{n} \sum_{i \in [n]} \Pr_t \big[ c^{t+1}_i = j\big] 
    \;=\;
    \frac{1}{n} \sum_{i \in [n]}
    &\Bigg(
      \1\{c^{t}_i = j\} \cdot \Pr_t\big[c^{t+1}_i=j\;|\; c^{t}_i=j\big] \\
    &+ \sum_{k \neq j \in [m]}
      \1\{c^{t}_i = k\} \cdot \Pr_t\big[c^{t+1}_i=j\;|\; c^{t}_i=k\big]
      \Bigg) \;.
  \end{align*}
  In the case that $c^{t}_i = j$, note that $c^{t+1} = j$ with probability
  1 if agent $i$ samples an agent $u$ that also chose action $j$ in round $t$.
  Otherwise, if agent $u$ chose some action $k \neq j$, then agent $i$
  chooses action $j$ with probability
  $1 - h(g^{t}_k)/ \big(h(g^{t}_j) + h(g^{t}_k)\big)$.
  Together, this means that
  \begin{align}
    \Pr_t\Big[c^{t+1}_i=j\;|\; c^{t}_i=j\Big]
    &\;=\;
      p^{t}_j +
      \sum_{k \neq j \in [m]}
      p^{t}_k
      \Bigg(
      1 - \frac{h(g^{t}_k)}{h(g^{t}_j) + h(g^{t}_k)}
      \Bigg) \nonumber \\
    &\;=\;
      1 -
      \sum_{k \neq j \in [m]}
      p^{t}_k
      \cdot 
      \frac{h(g^{t}_k)}{h(g^{t}_j) + h(g^{t}_k)}
       \;\;\;.
    \label{eq:comp:1}
  \end{align}
  In the other case when $c^{t}_i = k \neq j$, then $c^{t+1} = j$
  only when agent $i$ samples a neighbor that chose action $j$ in round $t$,
  and thus
  \begin{equation}
    \Pr_t\big[c^{t+1}_i=j\;|\; c^{t}_i=k\big]
    \;=\;
    p^{t}_j \cdot \frac{h(g^{t}_j)}{h(g^{t}_j) + h(g^{t}_k)} \;.
    \label{eq:comp:2}
  \end{equation}
  Now observe that for any $k \in [m]$,
  $\tfrac{1}{n} \sum_{i \in [n]} \1\{c^{t}_i = k\} = p^{t}_k$ by definition.
  Then together with expression~\eqref{eq:comp:1} and~\eqref{eq:comp:2},
  we have
  \begin{align*}
    \E_t\big[ p^{t+1}_j\big]
    &\;=\;
      p^{t}_j\Bigg(
      1 -
      \sum_{k \neq j \in [m]}
      p^{t}_k
      \cdot 
      \frac{h(g^{t}_k)}{h(g^{t}_j) + h(g^{t}_k)}
      \Bigg)
      +
      \sum_{k \neq j \in [m]}
      p^{t}_k \cdot
      p^{t}_j \frac{h(g^{t}_j)}{h(g^{t}_j) + h(g^{t}_k)} \\
    &\;=\;
      p^{t}_j
      \cdot \Bigg[
      1 + \sum_{k \neq j \in [m]}
      p^{t}_k \cdot
      \frac{h(g^{t}_j) - h(g^{t}_k)}{h(g^{t}_j) + h(g^{t}_k)}
      \Bigg] \\
    &\;=\;
      p^{t}_j
      \cdot \Big[
      1 +  \big\langle \pt, H(\gt, j)\big\rangle
      \Big] \;,
  \end{align*}
  which concludes the proof.
\end{proof}

Again, we also verify that for any $\p \in \Delta_m$ and $\g \in [0, 1]^m$,
the family of functions $\{\langle \p, H(\g, j) \rangle\}_{j \in [m]}$
satisfies the zero-sum property
$\sum_{j \in [m]} p_j \cdot \langle \p, H(\g, j)\rangle = 0$.
To see this, observe that
\begin{align*}
  \sum_{j \in [m]}
  p_j \cdot \langle \p, H(\g, j)\rangle
  &\;=\;
    \sum_{j \in [m]}
    p_j \cdot \sum_{k \in [m]} p_k \cdot
    \frac{h(g^{t}_j) - h(g^{t}_k)}{h(g^{t}_j) + h(g^{t}_k)} \\
  &\;=\;
    \sum_{(j, k) \in [m] \times [m]}
    p_j \cdot p_k  \cdot \Bigg(
    \frac{h(g^{t}_j) - h(g^{t}_k)}{h(g^{t}_j) + h(g^{t}_k)}
    + 
    \frac{h(g^{t}_k) - h(g^{t}_j)}{h(g^{t}_j) + h(g^{t}_k)}
    \Bigg)
    \;=\;
    0 \;.
\end{align*}


\section{Details on Zero-Sum MWU Regret Bounds}
\label{sec:zsmwu-regret-proof}

In this section, we prove Theorem~\ref{thm:zsmwu-regret},
which bounds the regret of a Zero-Sum MWU process
from Definition~\ref{def:zsmwu}. The regret bound relies
on some smoothness assumptions of the zero-sum MWU
process which are summarized in Assumption~\ref{ass:F-params}.
We restate the assumptions and theorem  here:

\fparams*

\zsmwuregret*

Under the conditions of Assumption~\ref{ass:F-params},
the proof of Theorem~\ref{thm:zsmwu-regret}
leverages standard ``potential function''
approaches to proving MWU regret bounds
(i.e., in the spirit
of Arora et al.~\cite{DBLP:journals/toc/AroraHK12}),
but with some additional bookkeeping to account
for the $\alpha$ and $\delta$ parameters.
In the stationary reward setting 
and assuming $\delta = 0$, we can further use
the fact that $\mu_1 \ge \langle \pt, \bfmu \rangle$
for all rounds $t$ to derive a much
smaller cumulative regret bound
that is only a constant with respect to $T$.

\begin{proof}
  Fix $j \in [m]$ and $t \in [T]$.
  Recall that in round $t$, both $\qt$ and $\gt$
  are random variables, where $\qt$ depends on
  the randomness in both $\{\q^{t-1}\}$
  and $\{\g^{t-1}\}$.
  Then conditioning on both of these sequences
  (which is captured in the notation $\E_{t-1}[\cdot]$),
  we can use the form of the zero-sum MWU 
  update rule of Definition~\ref{def:zsmwu} to write
  \begin{align*}
    \E_{t-1}\big[ q^t_j \big]
    &\;=\;
      \E_{t-1}\big[\;
      q^t_j
      \cdot
      \big(
      1 + F_j(\qt, \gt)
      \big)
      \;\big] \\
    &\;=\;
      q^t_j
      \cdot
      \E_{t-1}\big[1 +  F_j(\qt, \gt)\big]
      \;=\;
      q^t_j
      \cdot
      \big(1 +  \E_{t-1}\big[F_j(\qt, \gt)\big]\big)
      \;.
  \end{align*}
  Here, the second equality comes from the fact that
  $\qt$ is a constant when conditioning on $\{\q^{t-1}\}$ and 
  $\{\g^{t-1}\}$, and thus $\E_{t-1}[q^t_j] = q^t_j$. 
  Now for readability, let us define
  \begin{equation*}
    m^t_j \;:=\;
    \E_{t-1}\big[ F_j(\qt, \gt) \big] \;,
  \end{equation*}
  and that $m^t_j$ is deterministic (meaning $\E[m^t_j] = m^t_j$),
  since the only remaining randomness after the conditioning
  is with respect to $\gt$. 
  Thus using the law of iterated expectation, we can ultimately write
  \begin{equation*}
    \E\big[q^{t+1}_j\big]
    \;=\;
    \E\big[\; \E_{t-1}\big[q^{t+1}_j\big] \;\big]
    \;=\;
    \E\big[ q^t_j \big] \cdot \big(1 + m^t_j \big) \;.
  \end{equation*}
  By repeating the preceding argument for each of
  $\E[q^{t-1}_j], \dots, \E[q^{1}_j]$, and setting
  $T = t+1$, we find that 
  \begin{equation}
    \E\big[q^{T}_j\big]
    \;=\;
    q^{0}_j \cdot \prod_{t \in [T-1]} \big( 1 + m^{t}_j \big) \;.
    \label{pfmwu:rhs:1}
  \end{equation} 
  From here, we roughly follow a standard multiplicative weights analysis:
  first, define the sets $M_j^+$ and $M_j^-$ as
  \begin{align*}
    M_j^+
    &\;=\;
      \{ t \in [T-1]\;:\; m^t_j \ge 0\} \\
    \text{and}\;\;
    M_j^-
    &\;=\;
      \{ t \in [T-1]\;:\; m^t_j < 0\} \;,
  \end{align*}
  where clearly $M^+_j \cup M^-_j = [T]$. 
  Then we can rewrite expression~\eqref{pfmwu:rhs:1} as
  \begin{equation*}
    \E[q^{T}_j]
    \;=\;
    q^0_j \cdot
    \prod_{t \in M^+_j} \big( 1 + m^t_j \big)
    \cdot
    \prod_{t \in M^-_j} \big( 1 + m^t_j \big) \;.
  \end{equation*}
  Now for each $t$, define
  $
  \Delta^t_j := \mu^t_j - \langle \qt, \bfmu^t \rangle
  \in [-1, 1].
  $
  Here, the bounds on the range of each $\Delta^t_j$
  follows from the assumption that all $\mu^t_j \in [0, 1]$. 
  Using this notation, observe that
  Assumption~\ref{ass:F-params} implies
  $m^t_j \ge \frac{\alpha}{2} \cdot (\Delta^t_j - \delta)$
  for every $t$ and $j$. 
  Observe that if $\Delta^t_j < 0$, then
  we must have $m^t_j < 0$. On the other hand,
  when $\Delta^t_j \ge 0$, the sign of $m^t_j$ is
  ambiguous.
  To distinguish the two cases, we define the additional
  two sets $G_j^+$ and $G_j^-$ as
  \begin{align*}
    G_j^+
    &\;=\;
      \{ t \in [T-1]\;:\; \Delta^t_j - \delta \ge 0\} \\
    \text{and}\;\;
    G_j^-
    &\;=\;
      \{ t \in [T-1]\;:\; \Delta^t_j - \delta < 0\} \;.
  \end{align*}  
  Combining the pieces above, it follows that
  we can write
  \begin{align*}
    \E[q^T_j]
    &\;\ge\;
      q^0_j
      \cdot
      \prod_{t \in M^+_j}
      \Big(1+ \tfrac{\alpha (\Delta^t_j - \delta)}{2}\Big)
      \cdot
      \prod_{t \in M^-_j}
      \Big(1 + \tfrac{\alpha(\Delta^t_j - \delta)}{2}\Big) \\
    &\;=\;
      q^0_j
      \cdot
      \prod_{t \in M^+_j \cap G^+_j}
      \Big(1+ \tfrac{\alpha(\Delta^t_j - \delta)}{2}\Big)
      \cdot
      \prod_{t \in M^+_j \cap G^-_j}
      \Big(1+ \tfrac{\alpha (\Delta^t_j - \delta)}{2}\Big)
      \cdot
      \prod_{t \in M^-_j}
      \Big(1 + \tfrac{\alpha(\Delta^t_j - \delta)}{2}\Big) \;.
  \end{align*}
  Observe also that each $|\Delta^t_j - \delta| \in [-2, 2]$
  by the definition of $\Delta^t_j$ and by the assumption that
  $\delta \in [0, 1]$. 
  Thus we can then use the facts that 
  $(1+\alpha x) \ge (1+\alpha)^x$ for $x \in [0, 1]$,
  and that $(1+\alpha x)\ge (1-\alpha)^{-x}$ for $x \in [-1, 0$],
  which allows us to further simplify and write
  \begin{equation*}
    \E[q^T_j]
    \;\ge\;
    q^0_j
    \cdot
    \prod_{t \in M^+_j \cap G^+_j}
    (1+\alpha)^{\frac{(\Delta^t_j - \delta)}{2}}
    \cdot
    \prod_{t \in M^+_j \cap G^-_j}
    (1-\alpha)^{-\frac{(\Delta^t_j - \delta)}{2}}
    \cdot
    \prod_{t \in M^-_j}
    (1-\alpha)^{-\frac{(\Delta^t_j- \delta)}{2}} \;.
  \end{equation*}
  Now using the fact that $q^T_j \le 1$, taking logarithms,
  and multiplying through by 2, we find 
  \begin{align}
    0
    \;\ge\;
    2 \log q^0_j
    \;+ &\sum_{t \in M^+_j \cap G^+_j}
          \log(1+\alpha) (\Delta^t_j - \delta)  \nonumber \\
    - &\sum_{t \in M^+_j \cap G^-_j}
        \log(1-\alpha) (\Delta^t_j - \delta) 
        \;- \sum_{t \in M^-_j} \log(1-\alpha) (\Delta^t_j - \delta) \;.
        \label{pfmwu:rhs:2}
  \end{align}
  From here, we conclude the proof by considering
  the stationary and adversarial reward settings separately.
  
  \paragraph{Stationary reward setting with $\delta = 0$:}
  We start with stationary reward setting
  (as specified in Assumption~\ref{reward:stationary})
  and further assume that $\delta = 0$.
  Consider $j=1$, and observe by the assumption of
  on the ordering of coordinates in $\bfmu$ that 
  $\Delta^t_1 = \mu_1 - \langle \qt, \bfmu \rangle \ge 0$ for
  all $\qt$. Thus by definition, we have $M^+_1 \cap G^+_1 = T$,
  and expression~\eqref{pfmwu:rhs:2} simplifies to
  \begin{equation*}
    0
    \;\ge\;
    2 \log q^0_j
    \;+ \sum_{t \in [T]}
    \log(1+\alpha) \Delta^t_1 \;.
  \end{equation*}
  Using the identity $\log (1+x) \ge x-x^2 \ge x/2$ (which holds
  for all $x \in (0, 1/2]$) and rearranging terms then yields
  \begin{equation}
    \sum_{t \in [T]} \Delta^t_1
    \;\le\;
    \frac{4 \log (1/q^0_j)}{\alpha}
    \;.
    \label{eq:zsmwu:stat:1}
  \end{equation}
  By definition, we have
  $\Delta^t_1 = \mu_1 - \langle \qt, \bfmu\rangle$
  for each $t$, and thus we can write
  $\langle \qt, \bfmu \rangle = \langle \qt, \E[\gt] \rangle
  = \E_{\qt}[\langle \qt, \gt\rangle]$.
  Moreover, by the assumption that $\q^0_1 \ge \rho > 0$
  (with probability 1), we have that 
  \begin{equation*}
    \sum_{t \in [T]}\mu_1 - \E_{\qt}[\langle \qt,\pt\rangle]
    \;\le\;
    \frac{4 \log (1/\rho)}{\alpha} \;.
  \end{equation*}
  Then taking expectations and using the
  law of iterated expectation, we have
  \begin{equation*}
    \widetilde R(T, 1)
    \;=\;
    \sum_{t \in [T]} \mu_1 - \E[\langle \qt,\pt\rangle]
    \;\le\;
    \frac{4 \log (1/\rho)}{\alpha}\;,
  \end{equation*}
  which proves the claim for the stationary case. 

  \paragraph{General adversarial reward setting:}
  We now consider the general adversarial reward setting
  and pick back up from expression~\eqref{pfmwu:rhs:2}. 
  By definition, recall that $(\Delta^t_j - \delta)$
  is non-negative
  for $t \in M^+_j \cap G^+_j$, and negative
  for $t \in M^+_j \cap G^-_j$ and $t \in M^-_j$.
  Thus again using the identities
  $\log (1+x) \ge x - x^2$ and $- \log (1-x) \le x + x^2$,
  which both hold for all $x \in [0, \tfrac{1}{2}]$, we can
  further bound expression~\eqref{pfmwu:rhs:2}
  and rearrange to find
  \begin{align}
    2 \log (1/q^0_j)
      &\;\ge\;
        \sum_{t \in M^+_j \cap G^+_j}
        (\alpha-\alpha^2) (\Delta^t_j - \delta)  
        + \sum_{t \in M^+_j \cap G^-_j}
    (\alpha + \alpha^2) (\Delta^t_j - \delta) 
    + \sum_{t \in M^-_j}
    (\alpha + \alpha^2) (\Delta^t_j - \delta)
      \nonumber \\
    &\;=\;
      \sum_{t \in M^+_j \cap G^+_j} \Delta^t_j \alpha
      \;-\sum_{t \in M^+_j\cap G^+_j} \Delta^t_j \alpha^2
      \;-\sum_{t \in M^+_j\cap G^+_j} \delta (\alpha - \alpha^2)
      \nonumber \\
    &\quad\quad\quad+
      \sum_{t \in M^+_j \cap G^-_j} \Delta^t_j \alpha
      \;+ \sum_{t \in M^+_j \cap G^-_j} \Delta^t_j \alpha^2
      \;- \sum_{t \in M^+_j \cap G^-_j} \delta (\alpha + \alpha^2)
      \nonumber \\
    &\quad\quad\quad+
      \sum_{t \in M^-_j} \Delta^t_j \alpha
      \;+ \sum_{t \in M^-_j} \Delta^t_j \alpha^2
      \;- \sum_{t \in M^-_j} \delta (\alpha + \alpha^2) \;.
      \label{pfmwu:rhs:2.5}
  \end{align}
  Now in expression~\eqref{pfmwu:rhs:2.5}, there are
  nine individual summations. We collect these into
  into four groups, and then bound and simplify as follows:
  \begin{align*}
    &\text{(i)}\;\;\;
      \sum_{t \in M^+_j \cap G^+_j}\Delta^t_j \alpha
      \;+\sum_{t \in M^+_j \cap G^-_j}\Delta^t_j \alpha
      \;+\sum_{t \in M^-_j}\Delta^t_j \alpha
      \;=\;
      \alpha \sum_{t \in [T]} \Delta^t_j \\
    &\text{(ii)}\;\;\;
      - \sum_{t \in M^+_j \cap G^+_j}\Delta^t_j \alpha^2
      \;+ \sum_{t \in M^+_j \cap G^-_j}\Delta^t_j \alpha^2
      \;+ \sum_{t \in M^-_j} \Delta^t_j \alpha^2
      \;\ge\;
      - \alpha^2 \sum_{t \in [T]} |\Delta^t_j| \\
    &\text{(iii)}\;\;\;
      - \sum_{t \in M^+_j \cap G^+_j}\delta (\alpha - \alpha^2)
      \;-\sum_{t \in M^+_j \cap G^-_j}\delta (\alpha + \alpha^2)
      \;-\sum_{t \in M^-_j} \delta (\alpha + \alpha^2)
      \;\ge\;
      - 2\alpha \sum_{t \in [T]} \delta \;.
  \end{align*}
  In the above, we use in (ii) the fact that
  $\Delta^t_j \ge - |\Delta^t_j|$ for any $t$,
  and in (iv) the facts that $\alpha - \alpha^2 \le 2\alpha$
  and $\alpha + \alpha^2 \le 2\alpha$.   
  Then substituting these groups back into
  expression~\eqref{pfmwu:rhs:2.5}, we ultimately find that
  \begin{align}
    2 \log (1/q^0_j)
    &\;\ge\;
      \alpha \sum_{t \in [T]} \Delta^t_j
      - \Big(
      (\alpha^2 +  \sum_{t \in [T]}  |\Delta^t_j|
      + 2\alpha \sum_{t \in [T]} \delta
      \Big) \nonumber \\
    &\;\ge\;
      \alpha \sum_{t \in [T]} \Delta^t_j
      - \big(
      (\alpha^2 + \delta \alpha \big) \cdot 2T
      \;, \nonumber 
  \end{align}
  where the final inequality comes from the fact
  that $|\Delta^t_j| \le 2$ and the assumption that
  $\delta \le 1$.
  Thus using the definition
  $\Delta^t_j = \mu^t_j - \langle \qt,\bfmu^t\rangle$,
  we can rearrange to write
  \begin{equation*}
    \sum_{t \in [T]}
    \mu^t_j - \langle \qt, \bfmu^t \rangle
    \;\le\;
    \frac{4 \log (1/q^0_j)}{\alpha} +
    2 (\alpha + \delta)\cdot  T \;.    
  \end{equation*}
  Finally, as in the stationary case, observe for each
  $t \in [T]$ that 
  $
  \langle \qt, \bfmu^t \rangle
  =
  \langle \qt, \E[\gt] \rangle
  =
  \E_{\qt}[\langle \qt, \gt \rangle] \;,
  $
  and recall also by assumption that $q^0_j \ge \rho > 0$
  with probability 1. 
  Thus with this same probability:
  \begin{equation*}
    \sum_{t \in [T]}
    \mu^t_j - \E_{\qt}[\langle \qt, \gt \rangle]
    \;\le\;
    \frac{2 \log (1/\rho)}{\alpha}
    +
    2 \big(\alpha + \delta\big) \cdot T
    \;.
  \end{equation*}
  Then taking expectations on both sides and using
  the law of iterated expectation, we find
  \begin{equation*}
    \sum_{t \in [T]}
    \mu^t_j - \E[\langle \qt, \gt \rangle]
    \;\le\;
    \frac{2 \log (1/\rho)}{\alpha}
    +
    2 \big(\alpha + \delta\big) \cdot T \;,
  \end{equation*}
  which concludes the proof for the general,
  adversarial reward setting. 
\end{proof}


\section{Details on Coupling Error Bounds}
\label{sec:coupling-error-details}

In this section, we develop the
proof of the following lemma,
which bounds the error on the coupling
from Definition~\ref{def:coupling}:

\begin{restatable}{lemma}{couplingerror}
  \label{lem:coupling-error}
  Consider the
  sequences $\{\pt\}$, $\{\hatpt\}$,
  and $\{\qt\}$ from
  Definition~\ref{def:coupling}
  with a reward sequence $\{\gt\}$
  and using a family $\calF$ that satisfies
  Assumption~\ref{ass:F-params} with
  parameter $L$. Let $\kappa := (3+L)$,
  and assume $n \ge 3c \log n$ for some $c \ge 1$.
  Then for any $\tau \ge 1$:
  $
  \sum_{t \in [\tau]}
  \E\|\q^t - \p^t \|_1
  \;\le\;
  O\big(
  \kappa^\tau \cdot
  \sqrt{\frac{m c \log n}{n}}
  +
  \frac{3\tau}{n^c}
  \big)
  $.
\end{restatable}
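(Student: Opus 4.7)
My plan is to derive a one-step recursion for $\E\|\qt - \pt\|_1$, iterate it, and sum over $t \in [T]$. The starting point is the triangle inequality already suggested in the excerpt,
\[
\|\qt - \pt\|_1 \;\le\; \|\qt - \hatpt\|_1 + \|\hatpt - \pt\|_1 \;,
\]
which separates a \emph{drift} piece (how far two multiplicative updates diverge when applied from different inputs to the same reward) from a \emph{sampling} piece (how far the realized fractions $\pt$ deviate from their conditional mean $\hatpt$).

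For the drift piece, note that both $\qt$ and $\hatpt$ arise from applying $\calF$ to the reward $\g^{t-1}$ from the starting points $\q^{t-1}$ and $\p^{t-1}$, respectively. I would decompose coordinatewise as
\[
q^t_j - \hatp^t_j \;=\; (q^{t-1}_j - p^{t-1}_j)\bigl(1 + F_j(\q^{t-1},\g^{t-1})\bigr) + p^{t-1}_j\bigl(F_j(\q^{t-1},\g^{t-1}) - F_j(\p^{t-1},\g^{t-1})\bigr) \;,
\]
then use $|F_j| \le 1$ on the first factor and the $L$-Lipschitz bound from Assumption~\ref{ass:F-params}(ii) on the second. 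Summing over $j$ and using $\sum_j p^{t-1}_j = 1$ yields
\[
\|\qt - \hatpt\|_1 \;\le\; (2+L)\,\|\q^{t-1} - \p^{t-1}\|_1 \;.
\]
For the sampling piece, conditioned on $\hatpt$ each coordinate $p^t_j$ is the empirical mean of $n$ conditionally i.i.d.\ Bernoulli$(\hat p^t_j)$ indicators by Definition~\ref{def:coupling}. A Chernoff bound under the hypothesis $n \ge 3c\log n$ gives $|p^t_j - \hat p^t_j| = \softO(1/\sqrt{n})$ with failure probability $\softO(1/n^c)$ per coordinate; a union bound over $j \in [m]$ then yields $\|\hatpt - \pt\|_1 = \softO(m/\sqrt{n})$ except on an event of probability at most $\softO(m/n^c)$, on which we fall back to the trivial bound $\|\hatpt - \pt\|_1 \le 2$.

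Combining these, on the event that concentration holds at every round I obtain the recursion
\[
\E\|\qt - \pt\|_1 \;\le\; (2+L)\,\E\|\q^{t-1} - \p^{t-1}\|_1 \;+\; \softO(m/\sqrt{n}) \;,
\]
which, unrolled from $\q^0 = \p^0$, produces a geometric sum with base $2+L$. Setting $\kappa = 3+L$ is large enough that $\sum_{t=1}^T (2+L)^t$, after summing again over $t$, is cleanly dominated by $\kappa^T$, delivering the $\softO(m\kappa^T/\sqrt{n})$ contribution. The $\softO(mT/n^c)$ contribution is handled separately: a union bound across the $T$ rounds and $m$ coordinates bounds the total probability that any round fails to concentrate by $\softO(mT/n^c)$, and on this event the trivial pointwise bound $\|\qt - \pt\|_1 \le 2$ controls the cumulative deviation without inheriting any factor of $\kappa^T$.

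\paragraph{Main obstacle.} The principal difficulty is arranging the accounting so that the exponential blow-up $\kappa^T$ multiplies only the $1/\sqrt{n}$ concentration error and not the $1/n^c$ failure error. This is why the ``typical path'' (all rounds concentrate) and the ``exceptional path'' (some round fails) must be analyzed by distinct methods -- the iterative recursion for the former, and a global union bound combined with the trivial bound for the latter. A secondary subtlety is justifying the conditional i.i.d.\ structure of the agents' indicator variables at each round, which is built directly into Definition~\ref{def:coupling} and is exactly what makes the Chernoff step available.
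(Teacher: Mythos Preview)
Your proposal is correct and follows essentially the same approach as the paper: the paper likewise splits via the triangle inequality into a drift term (bounded by $(2+L)\|\q^{t-1}-\p^{t-1}\|_1$ using exactly your coordinatewise decomposition with $|F_j|\le 1$ and the $L$-Lipschitz property) and a sampling term (bounded by $\softO(m/\sqrt{n})$ via a Chernoff bound and union bound over coordinates and rounds), then unrolls the recursion from $\p^0=\q^0$ on the high-probability event and absorbs the failure event into the additive $\softO(mT/n^c)$ term. Your discussion of why the exponential factor attaches only to the $1/\sqrt{n}$ term and not to the $1/n^c$ term matches the paper's handling precisely.
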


We start by recalling the overview of the argument,
which was briefly introduced in
Section~\ref{sec:tech-overview:framework}
First, recall by the law of iterated expectation that
for each $t \in [\tau]$: 
\begin{equation*}
  \E \|\q^{t+1} - \p^{t+1}\|_1
   \;=\;
  \E\big[\; \E_{t}\|\q^{t+1} - \p^{t+1}\|_1 \;\big]  \;.
\end{equation*}
Then by the triangle inequality and linearity of expectation,
it follows that
\begin{align}
  \sum_{t \in [T]}
  \E \|\q^{t+1} - \p^{t+1}\|_1
  &\;\le\;
    \E\Big[\;
    \sum_{t \in [T]}
    \E_{t}\|\q^{t+1} - \hatp^{t+1}\|_1
    +
    \E_{t}\|\hatp^{t+1} - \p^{t+1}\|_1
    \Big]
    \nonumber \\
  &\;=\;
    \E\Big[\;
    \sum_{t \in [T]}
    \E_{t}\|\q^{t+1} - \hatp^{t+1}\|_1
    +
    \|\hatp^{t+1} - \p^{t+1}\|_1
    \Big]
    \;.
  \label{error-tri}
\end{align}
Here, the final equality is due to the fact that
both $\hatp^{t+1}$ and $\p^{t+1}$ are
functions of $\{\pt\}$ and $\{\gt\}$, which means
$\E_t \|\hatp^{t+1} - \p^{t+1}\|_1 =
\|\hatp^{t+1} - \p^{t+1}\|_1$ for each $t$.
Thus in expression \eqref{error-tri},
we have decomposed the error (in conditional expectation) at
each round $t+1$ as the sum of the distances between
$\q^{t+1}$ and $\hatp^{t+1}$ and $\hatp^{t+1}$ and $\p^{t+1}$.
As mentioned in Section~\ref{sec:tech-overview:framework},
we control each of these sets of terms as follows:
\begin{itemize}[
  topsep=0.5em
  ]
\item
  To bound each $\|\q^{t+1} - \p^{t+1}\|_1$,
  recall thse distributions are related (coordinate-wise)
  are related under the randomness of $\gt$ and the
  same zero-sum family $\calF = \{F_j\}_{j \in [m]}$. 
  Thus if $\qt$ and $\pt$ are close, we can show
  $\q^{t+1}$ and $\hatp^{t+1}$ must also be close.
\item
  For terms  $\|\hatp^{t+1} - \p^{t+1}\|$,
  recall that $\p^{t+1}$ can be viewed as the
  empirical average of $n$ i.i.d samples from
  $\hatp^{t+1}$. Thus using the fact that
  $\hatp^{t+1}$ is a discrete probability distribution,
  we can use standard concentration bounds
  to control the error $\|\hatp^{t+1} - \p^{t+1}\|$.
\end{itemize}  
We make this intuition precise via the following
two propositions:

\begin{restatable}{proposition}{propphatqerr}
  \label{prop:phat-q-err}
  For every $t \ge 1$:
  $\;
  \E_t \big\|
  \hatp^{t+1} - \q^{t+1}
  \big\|_1
  \;\le\;
  (2+L)\cdot \E_{t-1} \big\| \pt - \qt \big\|_1
  $.
\end{restatable}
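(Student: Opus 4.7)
The plan is to establish an almost-sure (pathwise) bound
\[
\|\hatp^{t+1} - \q^{t+1}\|_1 \;\le\; (2+L)\,\|\pt - \qt\|_1,
\]
from which the claimed inequality on conditional expectations will follow immediately. The whole argument is a single coordinate-wise add-and-subtract, followed by the triangle inequality and the two properties of $\calF$ (the range $F_j \in [-1,1]$ from Definition~\ref{def:zsmwu}, and the $L$-Lipschitzness from condition (ii) of Assumption~\ref{ass:F-params}).

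First I would unfold the recursions $\hatp^{t+1}_j = p^t_j(1+F_j(\pt,\gt))$ and $q^{t+1}_j = q^t_j(1+F_j(\qt,\gt))$ from Definition~\ref{def:coupling}, and rewrite the coordinate-wise difference by subtracting and adding $q^t_j(1 + F_j(\pt,\gt))$:
\[
\hatp^{t+1}_j - q^{t+1}_j \;=\; (p^t_j - q^t_j)\bigl(1 + F_j(\pt,\gt)\bigr) \;+\; q^t_j\bigl(F_j(\pt,\gt) - F_j(\qt,\gt)\bigr).
\]
This choice of split is what isolates the ``base'' discrepancy $p^t_j - q^t_j$ from the discrepancy in the potentials, and it is important to use it (rather than a single triangle inequality) so that each piece can be controlled by a different property of $\calF$.

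Next I would bound the two pieces separately. For the first piece, the range $F_j \in [-1,1]$ gives $|1 + F_j(\pt,\gt)| \le 2$, and summing yields $\sum_j 2|p^t_j - q^t_j| = 2\|\pt-\qt\|_1$. For the second, the Lipschitz condition on $F_j$ gives $|F_j(\pt,\gt) - F_j(\qt,\gt)| \le L\|\pt-\qt\|_1$ uniformly in $j$, and summing gives $\sum_j q^t_j \cdot L\|\pt-\qt\|_1 = L\|\pt-\qt\|_1$ since $\qt \in \Delta_m$. The triangle inequality then produces the pathwise bound with constant $2+L$.

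Finally, to deliver the form stated in the proposition, I would note that under the double conditioning of $\E_t$ both $\hatp^{t+1}$ and $\q^{t+1}$ are deterministic functions of $\{\pt\}$ and $\{\gt\}$, so $\E_t\|\hatp^{t+1} - \q^{t+1}\|_1$ coincides with the pathwise norm on the left; taking the outer expectation $\E_{t-1}[\cdot]$ (using the tower property to align the two layers of conditioning) then transfers the bound to $(2+L)\,\E_{t-1}\|\pt - \qt\|_1$. There is no real obstacle: the calculation is elementary, and the only care needed is in choosing the decomposition that simultaneously exploits the $[-1,1]$ range and the Lipschitz slack, and in keeping straight which of $\pt, \qt, \gt$ are random versus frozen at each conditioning level.
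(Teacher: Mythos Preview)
Your proposal is correct and mirrors the paper's proof essentially line for line: the same add-and-subtract of $q^t_j F_j(\pt,\gt)$, the same use of $|F_j|\le 1$ for the first piece and the $L$-Lipschitz bound for the second, and the same final remark that $\pt,\qt$ are already fixed under the relevant conditioning so that $\E_t\|\pt-\qt\|_1=\E_{t-1}\|\pt-\qt\|_1$. The only cosmetic difference is that you state the bound pathwise first and then pass to conditional expectations, whereas the paper carries $\E_t$ through the computation from the start.
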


\begin{restatable}{proposition}{propphatpclose}
  \label{prop:p-hatp-close}
  Fix $\tau \ge 1$ and $c \ge 1$.
  For $n \ge 3c \log n$, it holds for all 
  $t \in [\tau]$ simultaneously that 
  \begin{align*}
    &\text{(i)}\quad
      \big| p^t_j - \hat p^t_j\big|
      \;\le\;
      \sqrt{\frac{3c\log n}{n}}
      \quad
      \text{for any fixed $j \in [m]$} \;,\\
    \text{and}\quad
    &\text{(ii)}\quad
      \big\|
      \pt - \hatpt
      \big\|_1
      \;\le\;
      \sqrt{\frac{9m \cdot c \log (2n)}{n}} \;,
  \end{align*}
  with probability
  at least $ 1- \frac{3\tau}{n^c}$.
\end{restatable}

Granting both propositions true for now,
we can then prove the main lemma:

\begin{proof}[Proof (of Lemma~\ref{lem:coupling-error})]
  Fix $c \ge 1$ and assume $n \ge 3 c\log n$.
  For readability, let us define
  \begin{equation*}
    \Phi := \frac{\sqrt{9m c\log (2n)}}{\sqrt{n}}
    \;.
  \end{equation*}
  Then by substituting the bound of claim (ii) of
  Proposition~\ref{prop:p-hatp-close}
  into expression~\eqref{error-tri}, we find that
  \begin{align}
    \E_t \| \q^{t+1} - \p^{t+1}\|_1
    &\;\le\;
      \E_{t}\big\|\q^{t+1} - \hatp^{t+1}\big\|_1
      +
      \big\|\hatp^{t+1} - \p^{t+1}\big\|_1 \nonumber \\
    &\;\le\; 
      \E_{t}\big\|\q^{t+1} - \hatp^{t+1}\big\|_1
      +
      \Phi
      \label{eq:error-1}
  \end{align}
  for all $t \in [\tau]$ simultaneously
  with probability at least $1 - \frac{3\tau}{n^c}$.
  Then substituting the bound of
  Proposition~\ref{prop:phat-q-err}
  into expression~\eqref{eq:error-1},
  for each $t$ we find  that
  \begin{equation*}
    \sum_{t \in [\tau]}
    \E_t \| \q^{t+1} - \p^{t+1}\|_1
    \;\le\;
    (2+L)\cdot \E_{t-1} \|\q^{t} - \p^{t}\|_1
    +
    \Phi
  \end{equation*}
  simultaneously with probability at least
  $1 - \frac{3\tau}{n^c}$.
  Now recall by definition that $\p^0 = \q^0$,
  which implies $\E_0[\q^1] = \E_0[\hatp^1]$.
  Then unrolling the recurrence yields
  \begin{align*}
    \E_t \| \q^{t+1} - \p^{t+1}\|_1
    \;\le\;
    (3+L)^t \cdot
    \Phi
  \end{align*}
  for each $t \in [\tau]$,
  again with probability at least $1 - \frac{3\tau}{n^c}$.
  Reindexing and summing over all $t$ yields
  with probability at least $1 - \frac{3\tau}{n^c}$: 
  \begin{equation*}
    \sum_{t \in [\tau]}
    \E_{t-1}\|\qt - \pt\|_1
    \;\le\;
    \sum_{t \in [\tau]}
    (3+L)^{t-1} \cdot 
    \Phi
    \;\le\;
    (3+L)^\tau \cdot 
    \Phi
  \end{equation*} 
  Finally, taking expectations and recalling
  the definitions of $\Phi$ and $\kappa := 3+L$,
  we conclude
  \begin{equation*}
    \sum_{t \in [\tau]}
    \E\|\qt - \pt\|_1
    \;\le\;
    \kappa^\tau \cdot 
    \sqrt{\frac{9 c m\log (2n)}{n}}
    + 
    \frac{3\tau}{n^c} \;.
    \qedhere
  \end{equation*}
\end{proof}

It now remains to prove Propositions~\ref{prop:phat-q-err}
and~\ref{prop:p-hatp-close}, which we do in the following subsections.

\subsection{Proof of Proposition~\ref{prop:phat-q-err}}

For convenience, we restate the proposition:

\propphatqerr*

\begin{proof}
  Recall by definition that
  \begin{align*}
    q_j^{t+1}
    &\;=\;
      q_j^{t}\cdot
      \big(
      1 + 
      F_j(\qt, \gt)
      \big) \\
    \text{and}\;\;
    \widehat p^{t+1}_j
    &\;=\;
      p_j^{t}\cdot
      \big(
      1 +
      F_j(\pt, \gt)
      \big)
  \end{align*}
  for all $j \in [m]$.
  For readability we will write $\hatp', \p', \q'$ for  $\hatp^{t+1}$,
  $\p^{t+1}$, $\q^{t+1}$,  and $\p$, $\q$, $\g$ for $\pt$, $\qt$, $\gt$,
  respectively. It follows that
  \begin{align*}
    \E_{t}
    \big\|
      \hatp' - \q' \big\|_1
    &\;=\;
      \sum_{j \in [m]}
      \E_{t} \big|
      p_j - q_j
      +
      p_j \cdot F_j(\p, \g)
      -
      q_j \cdot F_j(\q, \g)
      \big|
      \nonumber \\
    &\;\le\;
      \sum_{j \in [m]}
      \E_t | p_j - q_j |
      +
      \E_t
      |(p_j - q_j)\cdot F_j(\p, \g)|
      + 
      \E_t
      |q_j \cdot (F_j(\p, \g) - F_j(\q, \g))|
      \nonumber \\
    &\;\le\;
      \E_t \| \p - \q\|_1
      +
      \sum_{j \in [m]}
      \E_t |p_j - q_j |
      +
      q_j \big( L \cdot \E_t\|\p - \q\|_1 \big)
      \nonumber \\
    &\;=\;
      (2 + L) \cdot \E_t \|\p - \q\|_1 \;.
  \end{align*}
  Here, the first line follows from two applications of
  the triangle inequality, and the second line comes from
  applying the boundedness and $L$-Lipschitz property
  of each $F_j$ from
  Definition~\ref{def:zsmwu} and part (ii) of 
  Assumption~\ref{ass:F-params}.

  Finally, because $\pt = \p$ and $\qt = \q$ are functions
  only of $\{\p^{t-1}\}$ and $\{\g^{t-1}\}$, it follows that
  $\E_t \| \p - \q \|_1 = \E_{t-1} \|\p - \q\|$. Thus we conclude that
  $\E_t\|\hatp' - \q'\|_1 \le (2 + L) \cdot \E_{t-1} \|\p-\q\|_1$.
\end{proof}

\subsection{Proof of Proposition~\ref{prop:p-hatp-close}}

For convenience, we restate the proposition:

\propphatpclose*

\begin{proof}
  We start by proving claim (i). 
  Using a standard multiplicative Chernoff bound
  \cite[Corollary 4.6]{DBLP:books/daglib/0012859},
  we have for each $j \in [m]$ and $t \in [\tau]$ that
  \begin{equation*}
    \Pr_{t-1}\Big(\;
    \Big|\; p^t_j - \E_{t-1}[p^t_j]\;\Big| \ge \E_{t-1}[p^t_j] 
    \cdot \delta
    \Big)
    \;\le\;
    2\cdot
    \exp\Big(
    -\frac{n}{3}\cdot \E_{t-1}[p^t_j] \cdot \delta^2
    \Big) \;,
  \end{equation*}
  for any $0 < \delta \le 1$.
  Fix $c \ge 1$, and consider the case when
  $\sqrt{\tfrac{3c \log n}{n}} \le \E_{t-1}[p^t_j] \le 1 $.
  Then setting
  $\delta = \tfrac{1}{\E_{t-1}[p^t_j]} \cdot
  \sqrt{\tfrac{3c \log n}{n}} \le 1$ implies 
  \begin{equation*}
    \Pr_{t-1}\Big(\;
    \Big|\; p^t_j - \E_{t-1}[p^t_j]\;\Big|
    \ge 
    \sqrt{\tfrac{3c\log n }{n}}
    \Big)
    \;\le\;
    2 \cdot \exp\Big(
    \tfrac{- c \log n}{\E_{t-1}[p^t_j]}
    \Big)
    \;\le\;
    \frac{2}{n^c} \;.
  \end{equation*}
  On the other hand, when
  $0 \le \E_{t-1}[p^t_j]< \sqrt{\tfrac{3c\log n}{n}}$,
  setting $\delta = 1$ implies
  \begin{equation*}
    \Pr_{t-1}\Big(\;
    \Big|\; p^t_j - \E_{t-1}[p^t_j]\;\Big|
    \ge 
    \sqrt{\tfrac{3c\log n }{n}}
    \Big)
    \;\le\;
    2\cdot
    \exp\Big(
    - \tfrac{1}{3} \cdot \sqrt{3c n \log n}
    \Big)
    \;\le\;
    \frac{2}{n^c}\;,
  \end{equation*}
  where the final inequality holds for all $n \ge 3c \log n$.
  Summing over all $\tau$ rounds and taking a union bound
  proves that statement (i) of the proposition
  holds with probability at least $1 - 2\tau/n^c$. 

  To prove statement (ii), one approach would be to
  simply take a union bound over all $m$ coordinates,
  which would result in a linear dependence on $m$.
  However, recall that each $\pt$ is a discrete
  $m$-dimensional distribution, and $\hatpt$ is
  by definition the empirical average of
  $n$ i.i.d. samples from $\pt$. Thus to bound
  each $\|\pt - \hatpt\|$, we use a standard tool
  from discrete distribution learning
  (e.g., of Cannone~\cite[Theorem 1]{canonne2020short}),
  which states:
  \begin{equation*}
    \Pr_{t-1}\Big[
    \|\pt - \hatpt\|_1
    > 2\eps + \sqrt{\tfrac{m}{n}}
    \Big]
    \;\le\;
    \frac{1}{n^c}
  \end{equation*}
  so long as $n \ge \max\big\{\frac{m}{\eps^2},
  \frac{2c}{\eps^2}\log (2n)\big\}$.
  Then setting $\eps := \sqrt{\frac{m c\log n}{n}}$
  ensures that for $n \ge 3c \log n$,
  each $\|\pt - \hatpt\|_1 \le 3\sqrt{\frac{m c \log(2n)}{n}}$
  with probability at least $1 - 1/n^c$.
  Then taking a union bound over all $\tau$ rounds means
  the claim holds simultaneously for all $t \in [\tau]$
  with probability at least $1 - \tau/n^c$.

  Finally, taking a union bound over claims (i) and (ii)
  of the proposition ensures both claims hold
  simultaneously with probability at least $1 - 3\tau/n^c$.
\end{proof}


\section{Proof of Regret and Consensus for Stationary Rewards}
\label{sec:stationary-regret-proof}

In this section we develop the proof of
our main results, Theorem~\ref{thm:stationary-regret}
and Theorem~\ref{thm:consensus},
which we restate here:

\stationaryregret*

\consensus*

As introduced in Section~\ref{sec:tech-overview:regret},
the proof of these results relies on decomposing
the $T$ rounds into three distinct phases and
bounding each of their individual regrets.
Before reviewing the exact setup of the phases,
in order to streamline the development of
the proof, we summarize the setting and assumptions of
Theorem~\ref{thm:stationary-regret}
and Theorem~\ref{thm:consensus} in the following
assumption:

\begin{assumption}
  \label{ass:stationary-proof}
  Consider the sequence $\{\pt\}$ from
  Definition~\ref{def:coupling} generated
  by the $\betaadopt$ protocol with
  a stationary reward sequence $\{\gt\}$.
  Assume $\p^1 = \1/m$ deterministically.
  As in Assumption~\ref{reward:stationary},
  assume that each $\gt \in [0, \sigma]^m$ has mean
  $\bfmu := (\mu_1, \dots, \mu_m)$,
  where all $\mu_j$ and $\sigma \ge 1$
  are asbolute constants.
  Without loss of generality assume
  $0 \le \mu_m \le \dots \le \mu_2 < \mu_1 \le 1$.
  Moreover, assume $\beta \in (0, \min(1/4, 1/\sigma)]$
  is an absolute constant,
  meaning by Proposition~\ref{prop:beta-adopt-params}
  that the family $\{F_j\}$ induced by
  the $\betaadopt$ protocol satisfies
  Assumption~\ref{ass:F-params} with parameters
  $\alpha = 2\beta$, $\delta =0$, and $L =2$.
  Finally, assume that $m = o(\log^{1.5} n)$. 
\end{assumption}

\paragraph{Recap of phase structure and overview
of proof:}
We now review the overview of the proof
of Theorem~\ref{thm:stationary-regret}
(and as a byproduct, the proof of Theorem~\ref{thm:consensus})
and the structure of the three phases
introduced in Section~\ref{sec:tech-overview:regret}.
Specifically, recall that we
define 
$\Tone :=\sqrt{n} \cdot \frac{\log m}{\sqrt{m \log n}}$
and
$\Ttwo := 2 \sqrt{n} \cdot \log n$,
and we define 
\begin{align*}
  \Rone
    &\;:=\;
      {\textstyle
      \; \sum_{t=1}^{\min(\Tone, T)}
      \mu_1 -\E[\langle \pt, \gt\rangle]
      } \;, \\
    \Rtwo
    &\;:=\;
      {\textstyle
      \; \sum_{t=\Tone}^{\min(\Ttwo, T)}
      \mu_1 - \E[\langle \pt, \gt\rangle]
      }
      \;\;\text{if $T \ge \Tone$, and 0 otherwise,} \\
    \Rthree
    &\;:=\;
      {\textstyle
      \; \sum_{t=\Ttwo}^{T}
      \mu_1 - \E[\langle \pt, \gt\rangle]
      }
      \;\;\;
      \text{if $T \ge \Ttwo$, and 0 otherwise},
\end{align*}
which means that $R(T) = \Rone + \Rtwo + \Rthree$.
We also introduce the two times
$\tbounded$ and $\tconsensus$ defined as follows:
\begin{align}
  \tbounded
  &\;:=\;
    \min\big\{
    n^2,\;
    \min\big\{
    t \in [T] \;:\;
    p^t_1 > 1- \frac{\log^2 n}{\sqrt{n}}
    \big\}
    \big\}  
    \label{eq:tbounded}
  \\
  \tconsensus
  &\;:=\;
    \min\big\{
    n^2,\;
    \min\big\{
    t \in [T] \;:\;
    p^t_1 = 1
    \big\}
    \big\} \;.
  \label{eq:tconsensus}
\end{align}
In other words:
\begin{itemize}[
  topsep=0.5em,
  itemsep=0em,
  ]
\item
  $\tbounded$ is the first round $t$
  such that the mass $p^t_1$ is bounded below
  by $1- \softO(1/\sqrt{n})$ for every
  subsequent round $\tbounded\le t \le n^2$.
\item
  $\tconsensus$ is the first round
  such that $p^t_1 = 1$ (meaning the
  population reaches and remains in consensus
  on the highest-mean action) 
  for every subsequent round $\tconsensus \le t \le T$. 
\end{itemize}

Note that the ``double minimum'' in the definition
of $\tbounded$ and $\tconsensus$ is to ensure
these two random variables are well-defined in the
(provably low-probability) event that
the inner-most set is empty.
Given these definitions,  the
proofs of Theorem~\ref{thm:stationary-regret}
and Theorem~\ref{thm:consensus}
then rely on the following sequence of
intermediate lemmas: 
\begin{itemize}[
  topsep=0.5em,
  itemsep=0em,
  ]
\item
  \textbf{Phase 1 -- Lemma~\ref{lem:reg-phase-1}}:
  We prove $\Rone \le O(\sigma \log m)$
  by applying the $\tau$-step regret bound of
  Proposition~\ref{prop:tau-step-regret} over
  a sequence of $D$ epochs, where $D \approx \Tone$.
\item
  \textbf{Phase 2 -- Lemma~\ref{lem:reg-phase-2}}:
  We prove $\Rtwo \le O(\log^3 n)$ by showing first that,
  with high probability, $\tbounded \le \Tone$.
  This implies that the per-round regret for each
  round $\Tone \le t \le \Ttwo$ is at most $O((\log^2 n)/\sqrt{n})$,
  of which there are at most $O(\sqrt{n} \cdot \log n)$ rounds.
\item
  \textbf{Phase 3 -- Lemma~\ref{lem:reg-phase-3}}:
  We prove that $\Rthree = 0$ with probability at least
    $1 - O(1/\sqrt{n})$. 
    This is established by first showing
    in Lemma~\ref{lem:t-consensus}
    that, with this same probability, $\tconsensus \le \Ttwo$.
    In other words, the population
    reaches and remains in consensus on the highest-mean
    action within $\Ttwo \le O(\sqrt{n} \log n)$  rounds with
    high probability, and as a consequence,
    the cumulative regret $\Rthree$
    over all subsequent rounds $t \ge \Ttwo$ is 0. 
\item
  \textbf{High probability consensus -- via Lemma~\ref{lem:t-consensus}}:
  The result of Theorem~\ref{thm:consensus} is then
  a direct consequence of Lemma~\ref{lem:t-consensus},
  which shows that $\tconsensus \le O(n \log^2 n)$ with high probability. 
\item
  \textbf{Core tool: sufficient growth of $p^t_1$ -- Lemma~\ref{lem:stat-util}}:
  The core tool used in each of the previous intermediate results
  is Lemma~\ref{lem:stat-util}, where we establish probabilistic
  guarantees on the growth of the best action's mass $p^t_1$
  over time. In particular, this allows us to establish
  the tightest possible regret guarantees for $\Rone$
  in Lemma~\ref{lem:reg-phase-1}, and to establish
  the probabilistic claims that $\tbounded \le \Tone$
  and $\tconsensus \le \Ttwo$ used in
  phases 2 and 3 respectively. 
\end{itemize}

Given this overview, we begin by proving the
sufficient growth guarantees of Lemma~\ref{lem:stat-util},
and we develop the proofs of the remaining intermediate
lemmas in the subsequent subsections. 

\subsection{Sufficient Growth of $p^t_1$}

\begin{lemma}
  \label{lem:stat-util}
  Consider the setting of Assumption~\ref{ass:stationary-proof}.
  Then the following statements hold simultaneously with
  probability at least $1-3/n$ for sufficiently large $n$:
  \begin{align*}
    &(i)\;\;\;
      \text{for all $t \in [n]$}:\;
      p^{t+1}_1 \;\ge\;
      \min \Big\{p^t_1\;,\; 1-\frac{\log^2 n}{\sqrt{n}} \Big\}
    \\
    &(ii)\;\;\;
      \text{for all $t \in [\tbounded]$}:\;
      p^{t+1}_1 \;\ge\;
      \frac{1}{2m}
      \Big(
      1 + \frac{\beta (\mu_1-\mu_2)\cdot\log^2 n}{\sqrt{n}}
      \Big)^t \\
    &(iii)\;\;\;
      \tbounded \;\le\;
      \frac{3}{\beta(\mu_1-\mu_2)}\cdot
      \frac{\sqrt{n} \log m}{\log^2 n} \;.
  \end{align*}
\end{lemma}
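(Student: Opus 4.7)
The plan is to work on the high-probability ``clean'' event from Corollary~\ref{cor:pj-error}, that $|p^t_1 - \widehat p^t_1| \le \eta := \sqrt{3c\log n/n}$ for all $t \in [T]$ simultaneously (failure probability at most $2T/n^c$). Under Assumption~\ref{ass:stationary} (with $\alpha_1 = \alpha_2 = \alpha$ and $\delta = 0$) combined with the mean ordering $\mu_1 > \mu_2 \ge \cdots \ge \mu_m$, the conditional drift
\[
m^t_1 \;:=\; \E_{\pt}[F_1(\pt,\gt)] \;\ge\; \tfrac{\alpha}{3}\big(\mu_1 - \langle\pt,\bfmu\rangle\big) \;\ge\; \tfrac{\alpha(\mu_1 - \mu_2)}{3}(1-p^t_1)
\]
holds for every $t$. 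Setting $\delta_\star := c\log n/\sqrt{n}$ and $r := \alpha(\mu_1-\mu_2)\delta_\star/3$, the uniform lower bound $m^t_1 \ge r$ then holds for all $t < T_1$, and one also has $\eta \le \delta_\star$ whenever $c\log n \ge 3$.

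Combining the identity $p^{t+1}_1 \ge \widehat p^{t+1}_1 - \eta = p^t_1(1 + F_1(\pt, \gt)) - \eta$ with the substitution of $F_1(\pt,\gt)$ by its conditional mean $m^t_1 \ge r$ (a step whose rigorous justification is deferred to the final paragraph) yields the per-round recurrence $p^{t+1}_1 \ge p^t_1(1+r) - \eta$, valid whenever $p^t_1 \le 1 - \delta_\star$. For \textbf{part (i)}, when $p^t_1 \le 1 - \delta_\star$, using $p^t_1 \ge 1/m$ inductively, the key inequality $p^t_1 r \ge \eta$ reduces to $c\log n \gtrsim m^2/(\alpha(\mu_1-\mu_2))^2$, which holds for $n$ sufficiently large; hence $p^{t+1}_1 \ge p^t_1$. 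When instead $p^t_1 \ge 1 - \delta_\star$, the non-negativity $m^t_1 \ge 0$ gives $\widehat p^{t+1}_1 \ge p^t_1$, and $\eta \le \delta_\star$ for $n$ large ensures the Chernoff dip leaves $p^{t+1}_1 \ge 1 - \delta_\star$.

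For \textbf{part (ii)}, iterating the recurrence from $p^0_1 = 1/m$ and telescoping the induced geometric sum gives
\[
p^t_1 \;\ge\; (1+r)^t\Big(\tfrac{1}{m} - \tfrac{\eta}{r}\Big).
\]
The condition $\eta/r \le 1/(2m)$ again reduces to $c\log n \gtrsim m^2/(\alpha(\mu_1-\mu_2))^2$ and is absorbed by the ``$n$ sufficiently large'' clause, yielding $p^t_1 \ge (1+r)^t/(2m)$. For \textbf{part (iii)}, since $T_1$ is the first round with $p^{T_1}_1 > 1 - \delta_\star$, part~(ii) forces $T_1$ to be at most the smallest $t$ satisfying $(1+r)^t/(2m) \ge 1 - \delta_\star$; using $\log(1+r) \ge r/2$ for $r \le 1$ gives $T_1 \le 2\log(2m)/r$, and substituting $r = \alpha(\mu_1-\mu_2) c\log n/(3\sqrt{n})$ yields the upper bound $6\log(2m)\sqrt{n}/(\alpha(\mu_1-\mu_2) c \log n)$, from which the stated form follows after absorbing the $\log 2$ additive term into the leading constant.

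The main technical obstacle is justifying the replacement of the realized $F_1(\pt, \gt)$ by its conditional mean $m^t_1$ in the per-round recurrence: although $|F_1 - m^t_1|$ can be of order $\beta\sigma\sqrt{\log n}$ per round (by Hoeffding on the linear combination of independent rewards), these per-round fluctuations must be aggregated via a martingale concentration (Azuma-Hoeffding or Freedman, as appropriate) on the cumulative error $\sum_{s<t} p^s_1 (F_1^s - m^s_1)$, whose conditional variance can be bounded using the slow geometric growth of $p^s_1$ in this regime. The resulting high-probability aggregate slack is absorbed into the same ``$n$ sufficiently large'' constraint and contributes only an $O(T/n^c)$ term to the overall failure probability, matching the $2T/n^c$ bound of the lemma.
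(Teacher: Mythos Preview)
Your approach is essentially identical to the paper's: condition on the Chernoff event of Corollary~\ref{cor:pj-error}, use the drift lower bound $m^t_1 \ge \tfrac{\alpha(\mu_1-\mu_2)}{3}(1-p^t_1)$ from Assumption~\ref{ass:F-params}, derive the one-step recurrence, and unroll it as a geometric series to obtain parts (i)--(iii). The constants and the ``$n$ sufficiently large'' bookkeeping match.

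The point you flag as the main technical obstacle---replacing the realized $F_1(\pt,\gt)$ by its conditional mean $m^t_1$---is worth a remark: the paper's own proof simply writes $p^{t+1}_1 \ge \E_{\pt}[p^{t+1}_1] - \sqrt{3c\log n/n}$ by invoking Corollary~\ref{cor:pj-error}, but that corollary controls $|p^{t+1}_1 - \widehat p^{t+1}_1|$ with $\widehat p^{t+1}_1 = \E_t[p^{t+1}_1] = p^t_1(1+F_1(\pt,\gt))$ (conditioning on \emph{both} $\pt$ and $\gt$), not $|p^{t+1}_1 - \E_{\pt}[p^{t+1}_1]|$. So the paper silently performs exactly the substitution you call out. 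Your martingale sketch is the right instinct, but the multiplicative recurrence means the aggregate $\sum_s p^s_1(F_1^s - m^s_1)$ does not enter additively after unrolling; a cleaner route is to pass to $\log p^t_1$, where increments become $\log(1+F_1^t) = F_1^t + O((F_1^t)^2)$ and Azuma--Hoeffding on the centered sum $\sum_s(F_1^s - m^s_1)$ applies directly with bounded increments $|F_1^s| \le 1$, after which the $O((F_1^s)^2) = O(\alpha^2)$ correction and the Chernoff $\eta$-term are absorbed as in your part~(ii) calculation.
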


\begin{proof}
  Based on the settings of
  Assumption~\ref{ass:stationary-proof},
  we have that the $\betaadopt$ protocol 
  satisfies Assumption~\ref{ass:F-params}
  with parameters $\alpha := 2\beta$ and $\delta = 0$.
  It follows that in the stationary reward setting,
  we have in conditional expectation that
  \begin{equation*}
    \E_{\p^t}[p^{t+1}_1]
    \;\ge\;
    p^t_1 \big(
    1 + \tfrac{\alpha}{2}(\mu_1 - \langle \pt, \bfmu \rangle)
    \big)
    \;\ge\;
    p^t_1 \big(
    1 + \beta(\mu_1 - \mu_2)
    \cdot (1-p^t_1)
    \big) \;.
  \end{equation*}
  For readability, let $\gamma := \beta(\mu_1 -\mu_2)$.
  Now by statement (i) of Proposition~\ref{prop:p-hatp-close},
  we have with probability at least $1-3n^2/n^3 = 1-3/n$
  simultaneously for all $t \in [n]$ that
  $p^{t+1}_1 \ge \E_{\pt}[p^{t+1}] -\sqrt{\tfrac{9 \log n}{n}}$.
  We can therefore write:
  \begin{equation}
    p^{t+1}_1
    \;\ge\;
    p^t_1 \big(
    1 + \gamma
    \cdot (1-p^t_1)
    \big)
    - \sqrt{\frac{9 \log n}{n}}
    \;=\;
    p^t_1 
    + \gamma \cdot p^t_1(1-p^t_1)
    - \sqrt{\frac{9 \log n}{n}} \;.
    \label{eq:pt1-lb}
  \end{equation}

  We now start by proving part (i) of the lemma.
  For this, observe that the right hand side
  of~\eqref{eq:pt1-lb} is concave and increasing in $p^t_1$.
  Thus suppose $p^t_1 = 1 - ((\log^2 n)/\sqrt{n})$ 
  and that the right hand side of \eqref{eq:pt1-lb}
  is at least $1 - ((\log^2 n)/\sqrt{n})$
  (which means $p^{t+1}_1$ is also at least this large).
  Then this implies that 
  $p^{t+1}_1 \ge 1 - ((\log^2 n)/\sqrt{n})$ when 
  initially $p^{t}_1 > 1 - ((\log^2 n)/\sqrt{n})$.
  Thus to prove part (i) of the lemma, it suffices to check
  that $p^{t+1}_1 \ge p^t_1$ when
  $1/m \le p^t_1 \le 1 - ((\log^2 n)/\sqrt{n})$.
  For this, observe by the concavity of $p^t_1(1-p^t_1)$
  and the constraints on $p^t_1$ that
  \begin{equation*}
    \gamma \cdot p^t_1(1-p^t_1) - \sqrt{\frac{9\log n}{n}}
    \;\ge\;
      \gamma \cdot
      \Big(
      \frac{\log^2 n}{\sqrt{n}} - \frac{4 \log^2 n}{n}
      \Big)
      - \sqrt{\frac{9c\log n}{n}} 
    \;\ge\;
      \gamma \cdot \frac{\log^2 n}{2\sqrt{n}}
  \end{equation*}
  for sufficiently large $n$.
  Substituting this back into equation~\eqref{eq:pt1-lb}
  and using the fact that $\gamma > 0$ yields
  $p^{t+1}_1 \ge p^t_1$, which
  finishes the proof for part (i) of the lemma.

  To prove part (ii), recall by definition of $\tbounded$
  that $p^t_1 \le 1 - \frac{\log^2 n}{\sqrt{n}}$
  for all $t \in [\tbounded]$. Then reusing the calculation
  from expression~\eqref{eq:pt1-lb}, we have
  \begin{equation}
    p^{t+1}_1
    \;\ge\;
    p^t_1 \Big(
    1+ \gamma\cdot \frac{\log^2 n}{\sqrt{n}}
    \Big)
    -
    \sqrt{\frac{9\log n}{n}}
    \label{eq:pt1-2}
  \end{equation}
  for all $t \in [\tbounded]$ with probability at least $1-3/n$.
  Now for readability let
  $\lambda := \gamma (\log^2 n)/\sqrt{n}$,
  and let $\Phi := \sqrt{(9\log n) /n }$,
  and recall that $p^0_1 = 1/m$.
  Then unrolling the recurrence in~\eqref{eq:pt1-2}, we find
  \begin{align*}
    p^{t+1}_1
    &\;\ge\;
      \frac{1}{m}(1+\lambda)^t
      -
      \Phi \cdot \bigg(
      \sum_{i=0}^{t-1} (1+\lambda)^i
      \bigg) \\
    &\;=\;
      \frac{1}{m}(1+\lambda)^t
      -
      \Phi \cdot \bigg(
      \frac{(1+\lambda)^t - 1}{\lambda}
      \bigg) \\
    &\;\ge\;
      \frac{1}{m}(1+\lambda)^t
      -
      \Phi \cdot \bigg(
      \frac{(1+\lambda)^t}{\lambda}
      \bigg) 
      \;=\;
      (1+ \lambda)^t \cdot
      \bigg(\frac{1}{m} - \frac{\Phi}{\lambda} \bigg) \;,
  \end{align*}
  where in the second line we apply the definition
  of a finite geometric series. 
  Moreover, we have
  $\Phi/\lambda = 3/(\gamma \log^{1.5}n)\le 1/(2m)$,
  where the inequality holds when $m = o(\log^{1.5} n)$.
  Then together with the definitions of $\gamma$ and $\lambda$,
  we have for all $t \in [\tbounded]$ that
  \begin{equation}
    p^{t+1}
    \;\ge\;
    \frac{1}{2m}
    \Big(
    1 + \gamma
    \cdot \frac{\log^2 n}{\sqrt{n}}
    \Big)^t
    \;,
    \label{eq:pt1-3}
  \end{equation}
  which proves part (ii) of the lemma.
  
  To prove part (iii), we use the lower bound in
  equation~\eqref{eq:pt1-3} and the fact that $1+x \ge e^{x/2}$
  for all $x \in (0, 1)$ to write
  \begin{equation*}
    p^{\tbounded}
    \;\ge\;
    \frac{1}{2m} \cdot
    \exp\Big(
    (\tbounded-1) \cdot
    \frac{\gamma \log^2 n}{\sqrt{n}}
    \Big)
    \;.
  \end{equation*}
  Then it follows that
  \begin{equation*}
    \tbounded
    \;=\;
    \frac{\sqrt{n}}{\gamma \log^2 n}
    \cdot \log\Big(2m\Big(1 -\frac{\log^2 n}{\sqrt{n}}\Big)\Big)
    + 1
    \;\le\;
    \frac{3}{\beta (\mu_1-\mu_2)}
    \cdot
    \frac{\sqrt{n} \log m}{\log^2 n}
  \end{equation*}
  is sufficient to ensure that 
  $p^{\tbounded+1} \ge 1 - ((\log^2 n)/\sqrt{n})$
  with probability at least $1- 3/n$.
  This proves part (iii) of the lemma.
\end{proof}

\subsection{Phase 1 Regret}

We now derive the following bound
on the phase 1 regret $\Rone$:

\begin{lemma}
  \label{lem:reg-phase-1}
  Consider the setting of
  Assumption~\ref{ass:stationary-proof}. 
  Then $\Rone \le O(\sigma \log m)$. 
\end{lemma}

As mentioned in the overview,
the proof of this lemma uses the $\tau$-step regret
bound of Proposition~\ref{prop:tau-step-regret},
which we restate here for reference:

\taustep*

Observe that the second term in the bound above
(stemming from Lemma~\ref{lem:coupling-error})
can in the worst case grow exponentially with $\tau$.
To obtain sharper bounds, for large values of $\tau$,
we can run multiple ``epochs'' of the coupling (similar to
the analysis of Celis et al.~\cite{DBLP:conf/podc/CelisKV17})
where at the start of each epoch, the
process $\{\qt\}$ is reinitialized to the
most recent $\pt$. 

\begin{proof}
  Recall that we define
  $\Rone := \sum_{t=1}^{\min(\Tone, T)}
  \mu_1 -\E[\langle \pt, \gt\rangle]
  $
  where
  $\Tone := \sqrt{n} \cdot \frac{\log m}{\sqrt{m \log n}}$.
  For readability, let $\widehat T := \min(\Tone, T)$.
  By Proposition~\ref{prop:regret-decompose},
  observe that we can write 
  \begin{equation*}
    \Rone
    \;\le\;
    \sum_{t \in [\widehat T]}
    \mu_1 -\E[\langle \qt, \gt\rangle]
    +
    \sigma \sum_{t\in [\widehat T]}
    \E\|\qt - \pt \|_1
    \;:=\;
    \Rhatone
  \end{equation*}
  To bound the quantity $\Rhatone$, we will repeatedly
  apply the general $\tau$-step regret bound of
  Proposition~\ref{prop:tau-step-regret} over
  a sequence of $D$ epochs. 
  In particular, we assume
  each epoch $d \in [D]$ consists of $\tau$ rounds
  with the following properties:
  \begin{enumerate}[
    itemsep=0pt,
    leftmargin=3em,
    label={(\roman*)}
    ]
  \item
    Let $D = \widehat T / \tau$. 
    For each $d \in [D]$, let epoch $\calE_d$
    be the set of rounds
    $\calE_d := \{d\tau, \dots, (d + 1)\tau - 1\}$.
  \item
    Set $\p^0 = \q^0 = \mathbf{1} \cdot \tfrac{1}{m}$.
  \item
    For $d \ge 1$, set $\q^{d\tau} = \p^{d\tau}$.
  \item
    Then in each epoch, run the coupling as in
    Definition~\ref{def:coupling} for $\tau$ rounds,
    until $\widehat T$ total rounds have passed. 
  \end{enumerate}
  In other words, at the start of each epoch,
  the trajectory of $\qt$ is initialized at the
  most recent point $\pt$ from the end
  of the previous epoch. This allows for a tighter coupling of
  $\{\qt\}$ and $\{\pt\}$ over all $T$ rounds,
  as we can guarantee that the trajectories stay closer
  when the number of rounds $\tau$ is smaller than $T$.
  Note that these epochs are defined purely for obtaining
  sharper regret bounds for the sequence $\{\pt\}$,
  and that this sequence $\{\pt\}$ is never restarted. 

  Using the general $\tau$-step regret bound from
  Proposition~\ref{prop:tau-step-regret}, we can then
  bound the overall regret $\Rhatone$ by the sum of
  (the upper bounds on) the regret of each epoch.
  In particular, we define for each $d \in [D]$
  \begin{equation*}
    \widehat R(\calE_d)
    \;=\;
    \sum_{t \in \calE_d} \mu_1
    - 
    \E[ \langle \qt, \gt \rangle]
    +
    \sum_{t \in \calE_d}
    \sigma \E\| \qt - \pt \|_1 \;,   
  \end{equation*}
  meaning that
  $\Rone \le \Rhatone = \sum_{d \in [D]} \widehat R(\calE_d)$.
  Now recall from Lemma~\ref{lem:stat-util} that
  \begin{equation}
    p^{t+1}_1 \;\ge\;
    \frac{1}{2m}
    \Big(
    1 + \frac{\beta (\mu_1-\mu_2)\cdot\log^2 n}{\sqrt{n}}
    \Big)^t
    \label{eq:r1-p1-bound}
  \end{equation}
  simultaneously for all $t \in [\Tone]$
  with probability at least $1-3/n$.
  
  Using the regret bound of Theorem~\ref{thm:zsmwu-regret}
  under the stationary reward setting of
  Assumption~\ref{ass:stationary-proof}, 
  it follows that
  \begin{equation*}
    \sum_{t \in \calE_d}
    \mu_1 - \E[\langle \qt, \gt \rangle]
    \;\le\;
    O\bigg(
    \log\bigg(
    \frac{2m}{
      (1 + ((\log^2 n)/\sqrt{n})^{\tau d})
    }
    \bigg)
    \bigg)
    +
    O\bigg(
    \frac{\tau}{n}
    \bigg) \;.
  \end{equation*}
  Here, the second term in the bound comes from the fact that
  with remaining probability at most $3/n$, the
  guarantee in expression~\eqref{eq:r1-p1-bound}
  fails to hold. 
  Thus in the worst case, 
  $\mu_1 - \langle \qt, \gt\rangle \le 1$
  for each $t \in \calE_d$, which leads in expectation to
  an additional regret of at most $O(\tau/n)$ during
  the epoch.
  Then applying the $\tau$-step regret bound of 
  Proposition~\ref{prop:tau-step-regret} to each
  $\widehat R(\calE_d)$ (and given the value of $\Tone$)
  and simplifying, it follows that
  \begin{align}
    \sum_{d \in [D]} \widehat R(\calE_d)
    &\;\le\;
      O\bigg(
      \sum_{d \in [D]} \log (2m)
      -
      \log \Big(1 + \frac{\log^2 n}{\sqrt{n}}\Big)^{\tau d}
      +
      \frac{\tau}{n}
      \bigg)
      + 
      O\bigg(
      D \sigma
      \bigg(
      \kappa^\tau
      \sqrt{\frac{2m \log n}{n}}
      +
      \frac{1}{n}
      \bigg)
      \bigg) \nonumber \\
    &\;\le\;
      O\bigg(
      D \log(2m)
      -
      \tau D^2
      \log \Big(1 + \frac{\log^2 n}{\sqrt{n}}\Big)
      +
      \frac{D\tau}{n}
      \bigg)
      +
      O\bigg(
      D \sigma
      \kappa^\tau
      \sqrt{\frac{2m \log n}{n}}
      \bigg) \;.
      \label{eq:epoch-1}
  \end{align}
  We now set $\tau = \Theta(1)$, which means
  $D \le \widehat T \le \Tone
  =\sqrt{n} \cdot \frac{\log m}{\sqrt{m \log n}}$.
  It follows that, for sufficiently large $n$,
  the first term of expression~\eqref{eq:epoch-1}
  is at most $O(\log m)$, and the second term
  is at most $O(\sigma \log m)$.
  Thus $\Rone \le \Rhatone =
  \sum_{d \in [D]} \widehat R(\calE_d) \le O(\sigma \log m)$, 
  which concludes the proof.
\end{proof}

\subsection{Phase 2 Regret}

Using the probabilistic
guarantees of Lemma~\ref{lem:stat-util},
we show in the following lemma that 
$\tbounded \le \Tone$,
which in turn implies a bound on $\Rtwo$:

\begin{lemma}
  \label{lem:reg-phase-2}
  Consider the setting of
  Assumption~\ref{ass:stationary-proof}.
  Then $\tbounded \le \Tone$ with
  probability at least $1-O(1/n)$. Moreover, 
  $\Rtwo \le O(\log^3 n)$.
\end{lemma}

\begin{proof}
  Assuming that $T \ge \Tone$, recall that 
  $
  \Rtwo := 
  \sum_{t=\Tone}^{\min(\Ttwo, T)}
  \mu_1 -\E[\langle \pt, \gt\rangle]
  $,
  where $\Ttwo := 2\sqrt{n}\cdot \log n$.
  To derive an upper bound on $\Rtwo$,
  observe that for each $t$,
  we have
  \begin{equation}
    \mu_1 - \E[\langle \pt, \gt\rangle]
    \le
    (\mu_1 - \mu_2)\cdot\E[(1-p^t_1)] \;,
    \label{eq:regp2-1}
  \end{equation}
  which follows by the fact that
  the randomness in $\gt$ is independent
  from that of $\pt$, and from
  the stationary reward setting
  structure of Assumption~\ref{ass:stationary-proof}.

  Now recall from Lemma~\ref{lem:stat-util} that
  simultaneously with probability at least $1-3/n$:
  \begin{align*}
    \tbounded
    \;\le\;
      \frac{3}{\beta(\mu_1-\mu_2)}
      \cdot \frac{\sqrt{n} \log m}{\log^2 n}
    \quad\text{and}\quad
    p^t_1
    \ge 1 - \frac{ \log^2 n}{\sqrt{n}}
    \quad
    \text{for all $\tbounded \le t \le n$} \;.
  \end{align*}
  Observe further that by definition of $\Tone$,
  \begin{equation*}
    \tbounded
    \;\le\;
    \frac{3}{\beta(\mu_1-\mu_2)}
    \cdot \frac{\sqrt{n} \log m}{\log^2 n}
    \;\le\;
    \sqrt{n}\cdot \frac{\log m}{\sqrt{m \log n}}
    \;=\;
    \Tone
  \end{equation*}
  so long as $m \le \frac{\beta(\mu_1 - \mu_2) \log^3 n}{3}$,
  which holds under the stronger assumption
  that $m = o(\log^{1.5} n)$.
  Thus $\tbounded \le \Tone$ with probability at
  least $1-3/n$, which proves the first statement
  of the lemma. 

  To prove the second statement,
  observe that the first statement and the
  result of Lemma~\ref{lem:stat-util} together imply that,
  with probability at least $1-O(1/n)$, 
  each  $p^t_1 \ge 1 - (\log^2 n)/\sqrt{n}$
  simultaneously for all $\Tone \le t \le \Ttwo$.
  In particular, this implies that
  \begin{equation*}
    \E[(1-p^t_1)]
    \;\le\;
    O\bigg(\frac{\log^2 n}{\sqrt{n}} + \frac{1}{n}\bigg)
  \end{equation*}
  for all $t$ in this range. 

  Moreover, observe by definition of $\Tone$ and $\Ttwo$
  that
  \begin{equation*}
    \min(\Ttwo, T) - \Tone
    \;\le\;
    \Ttwo - \Tone
    \;=\;
    2\sqrt{n} \cdot \log n
    -
    \sqrt{n} \cdot \frac{\log m }{\sqrt{m \log n}}
    \;\le\;
    \sqrt{n}  \cdot \log n\;.
  \end{equation*}
  Based on the bound in expression~\eqref{eq:regp2-1},
  it then follows with probability at least $1-O(1/n)$ that
  \begin{equation*}
    \sum_{t=\Tone}^{\min(\Ttwo, T)}
    (\mu_1 - \mu_2)
    \E[(1 - p^t_1)]
    \;\le\;
    O\bigg(
    \sqrt{n} \log n \cdot
    \bigg(\frac{\log^2 n}{\sqrt{n}}
    + \frac{1}{n}
    \bigg)
    \bigg)
    \;\le\;
    O\big(
    \log^3 n
    \big) \;.
  \end{equation*}
\end{proof}

\subsection{Phase 3 Regret}

We now prove the following bound on $\Rthree$:

\begin{lemma}
  \label{lem:reg-phase-3}
  Consider the setting of Assumption~\ref{ass:stationary-proof}.
  Then $\Rthree = 0$ with probability at least $1-O(1/\sqrt{n})$.
\end{lemma}

The proof of this lemma relies on two steps,
similar to those of Lemma~\ref{lem:reg-phase-2}.
The key first step is to show that $\tconsensus$
(the first time such that $p^t_1 = 1$)
occurs before the start of phase 3 with high probability.
Conditioned on this event, the regret in each round
$\Ttwo \le t \le T$ is 0, and thus $\Rthree = 0$.
In particular, we first prove the
following lemma:

\begin{lemma}
  \label{lem:t-consensus}
  Consider the setting of Assumption~\ref{ass:stationary-proof}.
  Then $\tconsensus \le \Ttwo$ with probability
  at least $1-O(1/\sqrt{n})$. 
\end{lemma}

The proof of Lemma~\ref{lem:t-consensus}
leverages the following result of
Lengler~\cite{lengler2020drift},
which bounds the expected hitting time of
a real-valued random process with variable drift:
\begin{theorem}[{\cite[Theorem 3]{lengler2020drift}}]
  \label{thm:drift}
  Let $\{y^t\}$ be a sequence of non-negative
  random variables with state space $\{0, \dots, n\}$.
  Let $\tau := \min\{t \ge 0 : y^t = n\}$.
  Assume there exists a decreasing function
  $\Delta: [n] \to \R$ such that
  \begin{equation*}
    \E_{y^t=i}\big[ y^{t+1} - y^t\big] \ge \Delta(i) \;
  \end{equation*}
  for all $i \in \{y^0, \dots, n-1\}$. Then
  \begin{equation*}
    \E[\tau]
    \;\le\;
    \frac{1}{\Delta(n-1)}
    +
    \int_{y^0}^{n-1} \frac{1}{\Delta(s)} ds \;.
  \end{equation*}  
\end{theorem}

Equipped with this result, we now prove
Lemma~\ref{lem:t-consensus}:

\begin{proof}[Proof (of Lemma~\ref{lem:t-consensus})]
  We will use the  bound on expected 
  hitting times of drift processes from
  Theorem~\ref{thm:drift} to bound $\tconsensus$.
  For this we introduce the following notation:
  \begin{itemize}[
    itemsep=0em,
    ]
  \item
    Let $c^t_i \in [m]$ denote the action chosen by
    agent $i \in [n]$ in round $t \in [T]$.
  \item
    Let $x^t_j$ denote the number of agents choosing
    action $j \in [m]$ in round $t \in [T]$.
  \end{itemize}
  Clearly, we have $x^t_j := \sum_{i \in [n]} \1\{c^t_i = j\}$
  for all $j \in [m]$ and $\sum_{j \in [m]} x^t_j = n$
  for every $t \in [T]$.
  Moreover, by definition of $\pt$, we have
  $p^t_j = (x^t_j/n)$ for all $t \in [T]$
  and $j \in [m]$. With this notation, it follows that
  $\tconsensus$ (from~\eqref{eq:tconsensus})
  can be equivalently written as:
  \begin{equation*}
    \tconsensus
    :=
    \min
    \big\{
    n^2, \;
    \min\big\{
    t \in [T] \colon
    x^t_1 = n
    \big\}
    \big\} \;.
  \end{equation*}
  We will further write $\tconsensus = \Tone + \hattau$,
  where $\hattau$ is the number of additional rounds after
  round $\Tone$ until $x^t_1 = n$.
  Then recalling that
  $\Tone = \sqrt{n}\cdot \frac{\log m}{\sqrt{m \log n}}
  \le \sqrt{n} \cdot \log^3 n$,
  it suffices to show that
  $\hattau \le \sqrt{n} \cdot \log^3 n$ with probability
  at least $1 - O(1/\log n)$ 
  in order to prove that
  $\tconsensus \le \Ttwo = 2\sqrt{n} \log^3 n$
  with probability at least $1 - O(1/\log n)$.

  For this, recall that by combining the results
  of Lemma~\ref{lem:stat-util} and Lemma~\ref{lem:reg-phase-2},
  we have simultaneously with probability at least $1-O(1/n)$
  that $\tbounded \le \Tone$ and
  thus $x^t_1 \ge n - \sqrt{n} \cdot \log^2 n$
  for all $\Tone \le t \le n$.
  Conditioned on this event, which we denote by $\calA$,
  we now compute the expected drift
  $\E_{\calA, \pt}[x^{t+1}_i - x^t_i]$.
  In particular, we start by additionally conditioning on
  the reward vector $\gt$ and write
  \begin{align}
    \E_{\calA, t}[x^{t+1}_i]
    &\;=\;
      \E_{\calA, t}
      \Big[
      \sum_{i=1}^n \1\{c^{t+1}_i = 1\}
      \Big] \nonumber \\
    &\;=\;
      \E_{\calA, t}
      \Big[
      \sum_{i=1}^n
      \1\{c^{t+1}_i = 1\;\text{and}\; c^t_i =1 \}
      \Big]
      +
      \E_{\calA, t}
      \Big[
      \sum_{i=1}^n
      \1\{c^{t+1}_i = 1\;\text{and}\; c^t_i \neq 1 \}
      \Big]
      \;.
      \label{eq:drift-1}
  \end{align}
  For the first term of expression~\eqref{eq:drift-1},
  recall by the definition of the $\betaadopt$ protocol
  that if an agent chooses action 1 in round $t$,
  then in round $t+1$ it chooses action 1 with probability
  1 if it samples a neighbor that also chose action 1
  in round $t$, and it chooses action 1 with probability
  $1-\beta g^t_j$ if it sampled a neighbor that chose
  action $j \neq 1$ in round $t$.
  Thus we can write
  \begin{equation}
    \E_{\calA, t}
    \Big[
    \sum_{i=1}^n
    \1\{c^{t+1}_i = 1\;\text{and}\; c^t_i =1 \}
    \Big]
    \;=\;
    x^t_1 \Big(
    \frac{x^t_1}{n}  +
    \sum_{j \neq 1} \frac{x^t_j}{n} \big(1- \beta g^t_j\big)
    \Big) \;.
    \label{eq:drift-2}
  \end{equation}
  For the second term of expression~\eqref{eq:drift-1},
  recall again by the definition of the $\betaadopt$
  protocol that if an agent chooses some other
  action $j \neq 1$ in round $t$, then in round $t+1$
  it chooses action 1 with probability $\beta g^t_1$
  if it samples a neighbor that chose action 1 in round
  $t$, and with probability 0 otherwise.
  This means we can write
  \begin{equation}
    \E_{\calA, t}
    \Big[
    \sum_{i=1}^n
    \1\{c^{t+1}_i = 1\;\text{and}\; c^t_i \neq 1 \}
    \Big]
    \;=\;
    (n-x^t_1) \cdot\Big(\frac{x^t_1}{n} \cdot \beta g^t_1\Big)
    \;.
    \label{eq:drift-3}
  \end{equation}
  Substituting expressions~\eqref{eq:drift-2}
  and~\eqref{eq:drift-3} back into
  expression~\eqref{eq:drift-1} and taking expectation
  over the randomness of $\gt$, we then find 
  \begin{align}
    \E_{\calA, \pt}[x^{t+1}_1]
    &\;=\;
      x^t_1 \Big(
      \frac{x^t_1}{n}  +
      \sum_{j \neq 1} \frac{x^t_j}{n} \big(1- \beta \mu_j\big)
      \Big)
      +
      (n-x^t_1)\cdot\Big(\frac{x^t_1}{n} \cdot\beta\mu_1\Big)
    \nonumber \\
    &\;\ge\;
      x^t_1 \Big(
      \frac{x^t_1}{n}  +
      \Big(1 - \frac{x^t_1}{n}\Big)
      \cdot
      \big( 1- \beta \mu_2 \big)
      \Big)
      +
      (n-x^t_1)\cdot\Big(\frac{x^t_1}{n} \cdot\beta\mu_1\Big)
      \;,
      \label{eq:drift-4}
  \end{align}
  where the inequality follows from the ordering of
  the coordinates of $\bfmu$ in
  Assumption~\ref{ass:stationary-proof}.
  Further simplifying expression~\eqref{eq:drift-4}
  then yields
  \begin{equation}
    \E_{\calA, \pt}[x^{t+1}_1]
    \;\ge\;
    x^t_1 + x^t_1 \cdot \Big(1-\frac{x^t_1}{n}\Big)
    \cdot \beta(\mu_1 - \mu_2) \;,
    \nonumber 
  \end{equation}
  which means that
  \begin{equation}
    \E_{\calA, \pt}[x^{t+1}_1 - x^t_1]
    \;\ge\;
    x^t_1 \cdot \Big(1-\frac{x^t_1}{n}\Big)
    \cdot \beta(\mu_1 - \mu_2)
    \;.
    \label{eq:var-drift}
  \end{equation}

  Now conditioned on the event $\calA$, we have
  that $x^t_1 \ge n - 2\log^2 n \cdot \sqrt{n}$
  for all $\Tone \le t \le n$.
  Observe that the the right-hand side of
  expression~\eqref{eq:var-drift} is decreasing
  with $x^t_1$ and that
  $\Delta(n-1) \ge \beta (\mu_1-\mu_2) / 2$
  for sufficiently large $n$. 
  Then applying Theorem~\ref{thm:drift},
  it follows for sufficiently large $n$ that 
  \begin{align}
    \E_{\calA}[\hattau]
    &\;\le\;
      \frac{2}{\beta(\mu_1 - \mu_2)}
      \;+\;
      \int_{n-2\log^2 n \sqrt{n}}^{n-1}
      \frac{\beta (\mu_1 - \mu_2)}{
      s (1 - \tfrac{s}{n})
      }
      ds \\
    &\;\le\;
      \frac{2}{\beta(\mu_1 - \mu_2)}
      \;+\;
      \beta (\mu_1 - \mu_2)
      \int_{n/2}^{n-1}
      \frac{1}{
      s (1 - \tfrac{s}{n})
      }
      ds \\
    &\;\le\;
      \frac{2}{\beta(\mu_1 - \mu_2)}
      \;+\;
      \beta (\mu_1 - \mu_2) \cdot  \log (n-1)
      \;=\;
      O(\log n) \;.
  \end{align}
  Here, the second inequality comes from the fact
  that $s(1-s/n)$ is positive for all $1 \le n \le n-1$.
    
  Using Markov's inequality, we then have
  \begin{equation}
    \Pr_{\calA}\Big[
    \hattau \ge  \sqrt{n} \cdot \log n
    \Big]
    \;\le\;
    \frac{\E_{\calA}[\hattau]}{\sqrt{n} \cdot \log n}
    \;\le\;
    O\bigg(\frac{1}{\sqrt{n}}\bigg) \;.
  \end{equation}
  Finally, recalling that the event $\calA$ occurs
  with probability at least $1-O(1/n)$ we conclude
  using a union bound that
  $\hattau \le \sqrt{n} \cdot \log n$
  (and thus $\tconsensus \le \Ttwo$)
  with total probability at least
  $1- O(1/n + 1/\sqrt{n}) \ge 1 - O(1/\sqrt{n})$.
\end{proof}

Given Lemma~\ref{lem:t-consensus}, we can
now prove the bound on $\Rthree$ in
Lemma~\ref{lem:reg-phase-3}:
\begin{proof}[Proof (of Lemma~\ref{lem:reg-phase-3})]
  Recall that for $T \ge \Ttwo$ we define
  $\Rthree = \sum_{t = \Ttwo}^T
  \mu_1 - \E[ \langle \pt, \gt\rangle]$.
  Now by Lemma~\ref{lem:t-consensus}, we have
  with probability at least $1- O(1/\sqrt{n})$ that
  $\tconsensus \le \Ttwo$. Denote his event by $\calB$.
  Conditioning on $\calB$ means by definition
  that $p^t_1 = 1$ for all $t \ge \Ttwo$, and so
  $\sum_{t=\Ttwo}^T
  \mu_1 - \E_{\calB}[\langle \pt, \gt \rangle] = 0$.
  Thus $\Rthree = 0$ with probability at least $1-O(1/\sqrt{n})$.
\end{proof}

\subsection{Proof of Theorem \ref{thm:stationary-regret}}

Using the results of previous subsections,
we can now finish the proof of the overall
bound on the regret $R(T)$ from
Theorem~\ref{thm:stationary-regret}.

\begin{proof}
  By Lemmas~\ref{lem:reg-phase-1} and \ref{lem:reg-phase-2},
  we have under the assumptions of the theorem that
  $\Rone \le O(\sigma \log m)$ and
  $\Rtwo \le O(\log^3 n)$,
  respectively.
  Moreover, by Lemma~\ref{lem:reg-phase-3},
  we have with probability at least $1-O(1/\sqrt{n})$
  that $\Rthree = 0$.
  By definition, it follows that with
  probability at least $1-O(1/\sqrt{n})$:
  \begin{equation*}
    R(T)
    :=
    \Rone + \Rtwo + \Rthree
    \le
    O(\sigma \log m + \log^3 n)
    \;.
    \qedhere
  \end{equation*}
\end{proof}

\subsection{Proof of Theorem~\ref{thm:consensus}}

\begin{proof}
  The statement of Theorem~\ref{thm:consensus}
  follows directly from the proof of
  Lemma~\ref{lem:t-consensus}.
\end{proof}


\section{Details on Regret Bounds for Adversarial Rewards}
\label{sec:adversarial-regret-details}

In this section, we develop the proof of
Theorem~\ref{thm:regret-adversarial} (restated below),
which gives a regret bound for the $\betaadopt$
protocol in the general, adversarial reward setting.
To start, we formally define the assumptions of
the adversarial reward setting:

\begin{restatable}[\textbf{Adversarial Rewards}]
  {assumption}{adversarialrewards}
  \label{reward:adversarial}
  For each $j \in [m]$, assume for all rounds $t \in [T]$
  that $g^t_j \sim \nu^t_j$, where
  for some $\sigma \ge 1$, 
  $\nu^t_j$ has support $[0, \sigma]$ and mean
  $\mu^t_j := \E[\nu_j] \in [0, 1]$.
  We assume that each $\mu^t_j$ can depend on
  the distribution $\pt$. 
\end{restatable}

Then to restate the claims of
Theorem~\ref{thm:regret-adversarial}:

\regretadversarial*

We remark that the main technical barriers in
allowing for longer horizons in our analysis stems
from the facts that
(i) the optimal setting of $\beta$ has a dependence
on $1/\sqrt{T}$ and (ii), we cannot leverage
any structure in the reward sequence to establish
a sufficient growth condition on the mass of
the highest-reward action in $\pt$. 
Thus for larger $T$, we cannot leverage the
epoch-based approach used in the proof of
Theorem~\ref{thm:stationary-regret},
and we resort to using a single application of
Proposition~\ref{prop:tau-step-regret} to obtain the result.

\begin{proof}[Proof (of Theorem~\ref{thm:regret-adversarial})]
  We will use the $T$ round regret bound from
  Proposition~\ref{prop:tau-step-regret}.
  For this, first recall from
  Proposition~\ref{prop:beta-adopt-params}
  that the $\betaadopt$ protocol induces a
  zero-sum family $\{\calF\}$ that satisfies
  Assumption~\ref{ass:F-params} with parameters
  $\alpha = 2\beta$, $\delta =0$, and $L=2$,
  for $\beta \in (0, \min(1/4, 1/\sigma)]$.
  Moreover, setting $\beta := \sqrt{(\log m)/T}$
  ensures that the constraint on $\beta$ is
  satisfied for sufficiently large $n$.

  Then setting $\kappa := 3 + L = 5$
  we apply the bound from
  Proposition~\ref{prop:tau-step-regret}
  with $c = 1$ and
  $\alpha = 2\beta = \Theta(\sqrt{(\log m)/T})$.
  This yields
  \begin{align}
    R(T)
    &\;\le\;
      O\bigg(
      \frac{\log m}{\beta}
      + \beta T
      \bigg)
      +
      O\bigg(
      \frac{\sigma \kappa^T \cdot \sqrt{m \log n}}{\sqrt{n}}
      +
      \frac{\sigma T}{n^2}
      \bigg) \nonumber \\
    &\;\le\;
      O\bigg(
      \frac{\log m}{\beta}
      + \beta T
      \bigg)
      +
      O\bigg(
      \frac{\sigma \sqrt{m \log n}}{ n^{\eps}}
      +
      \frac{\sigma T}{n^2}
      \bigg)
      \;.
      \label{eq:adv-1}
  \end{align}
  Here, the final line comes from the
  assumption that $T \le (0.5-\eps) \log_{\kappa} n$
  for some $\eps \in (0, 0.5)$,
  and thus $\kappa^T/(\sqrt{n}) \le O(1/n^{\eps})$.
  Moreover, by the bound on $T$, and assuming
  $m = o(\log^{1.5}n)$, it follows that
  the second term in~\eqref{eq:adv-1}
  is at most $O(\sqrt{T\log m})$.
  Finally, using the setting
  $\beta := \Theta(\sqrt{(\log m) /T})$,
  the first term in~\eqref{eq:adv-1}
  is also bounded by $O(\sqrt{T\log m})$.
  Thus we conclude
  $R(T) \le O(\sqrt{T \log m})$.
\end{proof}

\subsection{Regret Lower Bound for $\betaadopt$ Protocol}
\label{sec:adversarial-regret-details:lower}

In light of the constraint on $T$ in
the adversarial regret bound of
Theorem~\ref{thm:regret-adversarial},
we prove here that in this adversarial reward setting,
the $\betaadopt$ algorithm cannot do much better.
In particular, we show that there exist reward sequences
such that, for $T \ge \Omega(\sqrt{n} \log^3 n)$,
$R(T) \ge \Omega(T)$. 

\begin{proposition}
  \label{prop:adversarial-lower}
  Let $R(T)$ be the regret of the $\betaadopt$ protocol
  with $\beta \le (0, \min(1/4, 1/\sigma)$
  an absolute constant. Then there exists
  an adversarial reward sequence such that,
  for $n$ sufficiently large
  and $T \ge \Omega(\sqrt{n} \log^3 n)$,
  $R(T) \ge \Omega(T)$.  
\end{proposition}

\begin{proof}
  The proof of the proposition relies on
  generating a reward sequence that
  is stationary for the first $O(\sqrt{n} \log n)$ rounds,
  and then ``switches'' to an adversarial
  sequence once $\pt$ has reached consensus
  on one coordinate. 
  For simplicity, consider the case where $m=2$,
  and note that the following construction can
  easily be lifted to any larger $n$.
  Let $\tau_1 := 2 \sqrt{n}\log n$ (i.e.,
  the same value of $\Tone$ from
  the setup of Theorems~\ref{thm:stationary-regret}
  and~\ref{thm:consensus}.
  Deterministically define $\gt := (1, 0)$
  for all $0 \le t \le \Tone$ and
  $\gt := (0, 1)$ for all $t > \Tone$.

  Now observe by Theorem~\ref{thm:consensus}
  that in the stationary reward setting,
  with probability at least $1-O(1/\sqrt{n})$,
  the $\betaadopt$ protocol reaches
  best-action consensus within
  $\Tone$ rounds. Let $\calA$ be the event
  that this claim holds.
  Then for all $t > \Tone$, we must have by
  construction that 
  $\E_{\calA}[g^t_2 - \langle\pt, \gt\rangle] = 1$.
  Moreover, for any $T \ge 2 \sqrt{n} \log n$
  action 2 must have maximal cumulative
  mean reward, again by construction.
  It follows that for this range of $T$ we have
  \begin{equation*}
    R(T)
    \;=\;
    \sum_{t \in [T]}
    g^t_2 - \E[\langle \pt, \gt \rangle]
    \;\ge\;
    (T - \Tone)(1-(1/\sqrt{n}))
    \;\ge\;
    \Omega(T) \;,
  \end{equation*}
  which concludes the proof.
\end{proof}

\bibliographystyle{alpha}
\bibliography{references}

\end{document}